%% file: main.tex
\documentclass{article}
\usepackage{iclr2027_conference,times}

\input{math_commands.tex}

\usepackage[utf8]{inputenc} 
\usepackage[T1]{fontenc}    
\usepackage{amsmath,amsthm}
\usepackage{amssymb}
\usepackage{booktabs}
\usepackage{graphicx}
\usepackage[
  pagebackref=true,
  colorlinks=true,
  linkcolor=Blue,
  citecolor=Blue,
  urlcolor=Blue
]{hyperref}       
\usepackage{url}
\usepackage{subcaption}
\usepackage{placeins}
\usepackage{algorithm}
\usepackage{algpseudocode}
\usepackage{enumitem}
\usepackage{xspace}
\usepackage[dvipsnames]{xcolor}         
\algrenewcommand\algorithmicrequire{\textbf{Input:}}
\algrenewcommand\algorithmicensure{\textbf{Output:}}

\title{\oursshort{}: Hadamard Codebooks for Fast Generative Image and Video Compression}

\author{\stepcounter{footnote}Or Ronai\thanks{Equal contribution.} , 
Vladimir Kulikov\footnotemark[2] , 
Tomer Michaeli \\
Technion -- Israel Institute of Technology \\
\texttt{\{or.ronai@campus, vladimir.k@campus, tomer.m@ee\}.technion.ac.il} \\
Project page: \url{https://h-ddcm.github.io}
}

\newcommand{\oursshort}{H-DDCM}
\newcommand{\myparagraph}[1]{\vspace{-5pt}\paragraph{#1}}

\newcommand{\mySuppTitle}{
    {\vbox{\hsize\textwidth
    {\LARGE\sc 
    Supplementary Material\par}
    \vskip 0.3in minus 0.1in}}
    \vspace{-1.5em}
}

\makeatletter
\DeclareRobustCommand\onedot{\futurelet\@let@token\@onedot}
\def\@onedot{\ifx\@let@token.\else.\null\fi\xspace}
\def\eg{\emph{e.g}\onedot, }

\def\ie{\emph{i.e}\onedot, }

\makeatother

\makeatletter
\newcommand*{\addFileDependency}[1]{
  \typeout{(#1)}
  \@addtofilelist{#1}
  \IfFileExists{#1}{\typeout{File #1 O.K.}}{\typeout{No file #1.}}
}
\makeatother

\newcommand{\beginsupplement}{%
    \renewcommand\thefigure{S\arabic{figure}}    
    \setcounter{figure}{0}
    \renewcommand\thetable{S\arabic{table}}    
    \setcounter{table}{0}  
    \renewcommand\theequation{S\arabic{equation}}    
    \setcounter{equation}{0}
    \renewcommand\thealgorithm{S\arabic{algorithm}}    
    \setcounter{algorithm}{0}  
}

\theoremstyle{plain}
\newtheorem{proposition}{Proposition}

\theoremstyle{definition}

\theoremstyle{remark}

\setlist[enumerate]{leftmargin=1.6em, labelsep=.4em}
\setlist[itemize]{leftmargin=1.6em, labelsep=.4em}

\iclrfinalcopy
\begin{document}

\maketitle

\begin{abstract}
\input{sections/abstract}
\end{abstract}

\input{sections/introduction}
\input{sections/related_work}
\input{sections/background}
\input{sections/method}
\input{sections/experiments}
\input{sections/application_to_streaming}
\input{sections/conclusion}
\input{sections/ai_use}
\input{sections/ethics_reproducibility}

\bibliography{iclr2027_conference}
\bibliographystyle{iclr2027_conference}

\clearpage
\appendix
\input{supplementary/supplementary}

\end{document}

%% file: math_commands.tex
\usepackage{amsmath,amsfonts,bm}

\def\eqref#1{(\ref{#1})}

\def\1{\bm{1}}

\DeclareMathAlphabet{\mathsfit}{\encodingdefault}{\sfdefault}{m}{sl}
\SetMathAlphabet{\mathsfit}{bold}{\encodingdefault}{\sfdefault}{bx}{n}

\DeclareMathOperator*{\argmin}{arg\,min}

\DeclareMathOperator{\sign}{sign}

\DeclareMathOperator{\TopM}{Top\text{-}M}
\DeclareMathOperator{\diag}{diag}
\DeclareMathOperator{\std}{std}

%% file: sections/abstract.tex
Training-free diffusion-based compression methods repurpose the generative priors of a diffusion or a flow model to compress data without retraining them as task-specific codecs. Recent codebook-based approaches such as Turbo-DDCM achieve strong perceptual quality at ultra-low bitrates, but their reliance on dense Gaussian codebooks remains costly in both memory and computation, and scales poorly with the large latent dimensions of modern image and video generators. Their combinatorial bitstream encoding further becomes computationally heavy at this scale. We introduce \oursshort{}, a scalable generative compression framework that replaces Gaussian codebooks with implicit randomized Hadamard dictionaries. The fast Walsh-Hadamard transform allows for implicitly synthesizing full-dimensional dictionaries with $O(d\log d)$ compute for projection and reconstruction and $O(d)$ memory, without materializing the full-dimensional dictionary. We further introduce an efficient table-free lexicographic coding scheme that makes combinatorial encoding practical at larger dimensions. Beyond images, we adapt the framework to an autoregressive video flow model and develop a streaming compression scheme that periodically re-encodes recently reconstructed frames into a fresh KV-cache prefix, preserving temporal context across segments at no additional transmitted bits. This enables long-form, close to real-time generative video compression on a single GPU. Across image and video benchmarks, \mbox{\oursshort{}} achieves competitive rate-distortion-perception performance while substantially reducing the memory and computational overhead of codebook-based training-free generative compression.

%% file: sections/introduction.tex
\section{Introduction}
\label{sec:introduction}

Generative priors have become increasingly important for low-bitrate image and video compression, where reconstructing perceptually plausible content can be preferable to preserving every pixel accurately. Early generative codecs were based on GANs~\citep{goodfellow2020generative,agustsson2019generative,mentzer2020high}, while more recent methods exploit the stronger priors of diffusion~\citep{sohl2015deep,ho2020denoising,song2021scorebased} and flow models~\citep{lipman2023flow,liu2023flow,albergo2025stochastic}. Most such approaches train or fine-tune the generative model specifically for compression~\citep{yang2023lossy,careil2024towards,korber2024perco,zhang2026regenvc,mao2026generative}, tying the codec to a particular backbone. In contrast, training-free diffusion compression leaves a pretrained generator frozen and encodes information directly into its sampling trajectory. Denoising Diffusion Codebook Models (DDCM)~\citep{ohayon2025compressed} realize this idea by replacing the random noise at each denoising step with a noise map selected from a $K$-atom Gaussian codebook shared by the encoder and decoder. The encoder selects the codeword that best aligns with the residual between the target and the model’s current clean prediction and transmits only its index; the decoder regenerates the same codeword and thereby follows the encoded denoising trajectory. Turbo-DDCM~\citep{vaisman2026turbo} substantially accelerates this procedure by selecting and combining multiple codebook atoms in closed form at each step, producing a noise map better aligned with the residual and cutting the expensive denoising steps by over $90\%$.

Despite this acceleration, Turbo-DDCM remains difficult to scale to the high-dimensional latent spaces of modern generative models. Its Gaussian codebook must either be stored or regenerated, and residual matching requires dense operations whose cost grows with both the latent dimension and codebook size. Restricting the codebook reduces this cost but also limits the set of residual directions available to the encoder. A complete codebook becomes particularly problematic at modern scales, \ie for large image latents such as FLUX~\citep{flux2024}, it already requires approximately $128$\,GiB of storage, while video latents such as those used by Causal-Forcing (CF)~\citep{zhu2026causal} require roughly $167$\,GiB. These costs turn the codebook itself into a major computational and memory bottleneck, limiting the applicability of training-free generative compression precisely for the large image and fast video backbones where it is most attractive.

In this work, we address these limitations through three main contributions. First, we replace dense Gaussian codebooks with structured randomized Hadamard dictionaries. Turbo-DDCM only accesses its codebook to correlate candidate atoms with the current residual and to reconstruct the selected combination; both operations can therefore be replaced by fast matrix-vector products. A randomized Hadamard transform provides these operations implicitly, without materializing the dictionary, reducing the computation to $O(d\log d)$ and memory to $O(d)$. Moreover, while Turbo-DDCM relies on the approximate orthogonality of high-dimensional Gaussian atoms, the Hadamard basis is exactly orthogonal, making its top-$M$ candidate thresholding rule optimal for the orthogonal Hadamard representation. We further extend this construction to an overcomplete structured dictionary, combining multiple independently randomized Hadamard bases to expose an even richer set of candidate directions while retaining the fast implicit projections.

Second, we address a separate computational bottleneck in the transmitted bitstream. Turbo-DDCM exploits the fact that the $M$ selected codebook indices form an unordered subset and encodes this subset through a lexicographic rank, reducing the index cost from independently transmitting $M$ indices to $\lceil \log_2 \tbinom{d}{M} \rceil$ bits. However, its implementation relies on a dense precomputed combinatorial table whose construction and memory requirements become impractical at the higher dimensions considered here. We develop a table-free lexicographic encoder and decoder, together with a grouping protocol that bounds the size of individual combinatorial operations. This retains the bit savings of the combinatorial coding while ensuring practical runtime at large latent dimensions.

Finally, we extend the framework to close to real-time autoregressive (AR) video compression. We adapt Causal-Forcing, a fast causal AR video flow model, into a streaming generative codec.
To additionally compress beyond the model’s native context length without periodically discarding temporal context, we introduce a re-encoded-tail cache refresh: at each continuation boundary, recently reconstructed frames are re-encoded into a fresh KV-cache prefix, providing the next segment with temporal context at no additional transmitted bits. Together, these components enable scalable streaming compression on a single GPU. On images, our structured dictionary matches the fidelity of complete-codebook Turbo-DDCM without materializing the corresponding codebook; on video, our streaming codec achieves competitive rate-distortion-perception performance while providing the highest measured throughput among the evaluated methods sharing the CF backbone.

%% file: sections/related_work.tex
\vspace{-2pt}
\section{Related work}
\vspace{-2pt}
\label{sec:related}

\paragraph{Neural and generative compression.}
Learned codecs replace the transform and entropy model of classical compression with trained networks~\citep{balle2017end,toderici2017full}. These approaches were extended from images to video via learned motion estimation and residual coding~\citep{lu2019dvc,li2023neural}. The original methods targeted low distortion, but tended to produce over-smoothed reconstructions at low bitrates. Later generative codecs traded exact fidelity for plausibility, first with GANs~\citep{agustsson2019generative,mentzer2020high} and more recently with diffusion priors trained from scratch~\citep{yang2023lossy,ghouse2023residual}, fine-tuned for compression~\citep{careil2024towards,korber2024perco}, or distilled into a single reverse step~\citep{park2026single,xue2026one}. This framework has extended to video, where a pretrained video diffusion model refines the output of a separately designed codec~\citep{mao2026generative}. All of these require training or fine-tuning dedicated to compression, which ties the codec to a particular backbone and generally requires retraining when the backbone changes.

\myparagraph{Training-free diffusion-based image compression.}
Training-free methods instead leave the generative model frozen and encode information through its sampling trajectory. DiffC~\citep{theis2022lossy} communicates samples of the forward process through reverse channel coding, which~\citet{vonderfecht2025lossy} made practical with a custom CUDA kernel. IPIC~\citep{xu2024idempotence} uses an off-the-shelf optimized encoder and builds on diffusion-based posterior sampling for decoding, while PSC~\citep{elata2025psc} uses a posterior-sampling based adaptive compressed sensing method~\citep{donoho2006compressed,elata2024adaptive} for encoding with a pretrained diffusion model. DDCM~\citep{ohayon2025compressed} replaces the Gaussian noise at each reverse step with an atom selected from a reproducible codebook, and Turbo-DDCM~\citep{vaisman2026turbo} combines multiple atoms per step in closed form, reducing the number of expensive denoising evaluations by over $90\%$. Our method belongs to this training-free setting, but replaces the explicit Gaussian codebooks with an implicit structured dictionary.

\myparagraph{Training-free generative video compression.}
Carrying the idea to video raises a difficulty absent from images, since temporal coherence is a property of the joint distribution across frames rather than of each frame separately.
ZeroGVC~\citep{gao2026zerogvc} encodes the first frame of every group of pictures as an intra-frame (I-frame) using an off-the-shelf image codec, \ie OneDC~\citep{xue2026one}, and incorporates Turbo-DDCM into an AR model for the remaining predicted frames (P-frames), steering a few-step consistency trajectory toward each target frame, with an optional bidirectional mode that borrows context from the next I-frame at no extra rate.
GVCC~\citep{zeng2026gvcc} observes that modern video foundation models are sampled by a deterministic rectified-flow ODE, which leaves no per-step stochastic channel to encode into, and restores one by converting the sampler into an equivalent marginal-preserving SDE. Both retain an explicit Gaussian codebook, and video is where that choice is most expensive: a latent chunk is larger than an image latent, and the codebook grows with the product of its size and that dimension (App.~\ref{app:codebook_cost}).

\myparagraph{Autoregressive and streaming video diffusion.}
Recent advances in video generation increasingly utilize causal and AR formulations to support real-time generation. Diffusion Forcing~\citep{chen2024diffusion} assigns independent noise levels per frame and thereby supports AR rollout, while CausVid~\citep{yin2025slow} and Self-Forcing~\citep{huang2026self} distill bidirectional video diffusion models into few-step causal students. Addressing structural limits in this distillation, \citet{zhu2026causal} demonstrate that the ODE-initialization stage requires frame-level injectivity -- a condition violated by bidirectional teachers. Their Causal-Forcing (CF) method resolves this by distilling from an AR teacher, thereby achieving practical real-time streaming generation on a single GPU.

%% file: sections/background.tex
\vspace{-2pt}
\section{Background}
\vspace{-2pt}
\label{sec:background}

Our method builds directly on Turbo-DDCM, which compresses data by replacing the random noise along a diffusion trajectory with combinations of atoms from a shared Gaussian codebook. We first review this formulation and its sparse noise-selection rule, and highlight the computational, memory and bitstream bottlenecks that emerge as latent dimension grows. We then introduce the randomized Hadamard transform and the properties that allow us to replace these dense codebook operations with fast implicit projections.

\vspace{-2pt}
\subsection{Denoising diffusion models}
\vspace{-2pt}
\label{sec:bg:ddcm}

A diffusion model~\citep{sohl2015deep,ho2020denoising,song2021scorebased} generates data by reversing a gradual noising process. Starting from $\mathbf{x}_T \sim \mathcal{N}(\mathbf{0},\mathbf{I}_d)$, the reverse step is given by 
\begin{equation}
\label{eq:ddpm}
\mathbf{x}_{t-1} = \boldsymbol{\mu}_t(\mathbf{x}_t) + \sigma_t\mathbf{z}_t, \qquad \mathbf{z}_t \sim \mathcal{N}(\mathbf{0}, \mathbf{I}_d),
\end{equation}
Here, $\boldsymbol{\mu}_t$ is parametrized by a neural network and $\mathbf{z}_t$ provides the stochastic degree of freedom in the transition. Its exact parameterization depends on the sampler: we use DDPM sampling for images and a few-step consistency formulation for our causal video backbone, whose corresponding updates are detailed in App.~\ref{app:sampling_formulations}. DDCM exploits precisely this freedom in $\mathbf{z}_t$, replacing randomly sampled noise with information-carrying noise maps selected by the encoder.

\vspace{-2pt}
\subsection{Turbo-DDCM}
\vspace{-2pt}
\label{sec:bg:turbo}

DDCM~\citep{ohayon2025compressed} replaces the random noise in~\eqref{eq:ddpm} with noise maps selected from a reproducible codebook $\boldsymbol{C}_t\in\mathbb{R}^{d\times K}$. Its columns $\smash{\mathbf{z}_t^{(1)},\ldots,\mathbf{z}_t^{(K)}}$ are i.i.d.\ Gaussian atoms generated from a seed shared by the encoder and decoder. To compress an image $\mathbf{x}_0$, the encoder selects in each timestep $t$ the atom most aligned with the residual $\mathbf{r}_t
:=
\mathbf{x}_0-\hat{\mathbf{x}}_{0|t}$, where $\hat{\mathbf{x}}_{0|t}$ is the model’s current clean prediction, which can be obtained from $\boldsymbol{\mu}_t(\mathbf{x}_t)$. Injecting a noise map correlated with $\mathbf{r}_t$ steers the sampling trajectory toward the target, while transmitting only its codebook index allows the decoder to reproduce the same trajectory.

Selecting a single atom provides limited residual correction per denoising step. Turbo-DDCM~\citep{vaisman2026turbo} instead approximates $\mathbf{r}_t$ using a sparse combination of $M$ codebook atoms, allowing substantially more information to be transmitted per step, therefore reducing the number of expensive denoiser evaluations. Writing $\boldsymbol{\alpha}_t \in \mathbb{R}^{K}$ for the coefficient vector, the per-step problem is
\begin{equation}
\label{eq:sparse-ls}
\boldsymbol{\alpha}_t^{\star} = \argmin_{\boldsymbol{\alpha} \in \mathbb{R}^{K}} \left\| \boldsymbol{C}_t\boldsymbol{\alpha} - \mathbf{r}_t \right\|_2^2 \quad \text{s.t.} \quad \|\boldsymbol{\alpha}\|_0 = M, \quad \alpha_i \in \mathcal{V} \cup \{0\} \ \ \forall i,
\end{equation}
with $\mathcal{V}$ being a quantization set of $2^{C}$ levels, so each nonzero coefficient costs $C$ bits.
Problem~\eqref{eq:sparse-ls} is combinatorial for a general dictionary. However, if the dictionary has orthonormal columns, it separates across coordinates and is solved exactly by thresholding its correlations with the residual~\citep{elad2010sparse}. Gaussian atoms are not orthogonal, but become approximately so in high dimensions; Turbo-DDCM therefore applies the same thresholding rule. For the one-bit case $\mathcal{V}=\{-1,+1\}$,
\begin{equation}
\label{eq:thresholding}
(\boldsymbol{\alpha}_t^{\star})_i =
\begin{cases}
\sign\big((\mathbf{u}_t)_i\big), & i \in \TopM(|\mathbf{u}_t|),\\ 0, & \text{otherwise},
\end{cases}
\qquad \mathbf{u}_t = \boldsymbol{C}_t^\top \mathbf{r}_t .
\end{equation}
For an orthogonal dictionary, this is exactly the minimizer of~\eqref{eq:sparse-ls}: the support collects the $M$ largest correlations, and each selected coefficient is rounded to the closest value in $\mathcal{V}$. The resulting noise map is normalized, and injected in place of $\mathbf{z}_t$ in~\eqref{eq:ddpm}
\begin{equation}
\label{eq:std-norm}
\mathbf{z}_t^\star =
\frac{\boldsymbol{C}_t\boldsymbol{\alpha}_t^\star}{\std(\boldsymbol{C}_t\boldsymbol{\alpha}_t^\star)}.
\end{equation}

Turbo-DDCM further reduces bitrate by terminating codebook selection early and completing the trajectory with $N+1$ deterministic DDIM steps~\citep{song2021denoising}. It also exploits the fact that the $M$ selected indices form an unordered subset: rather than transmitting them independently, it sends their lexicographic rank using $\lceil\log_2\binom{K}{M}\rceil + MC$ bits per coded step. This avoids the permutation redundancy of independently listing the selected indices.

Despite these improvements, two computational bottlenecks remain and become severe at large latent dimensions. First, computing $\boldsymbol{C}_t^\top\mathbf{r}_t$ and reconstructing $\boldsymbol{C}_t\boldsymbol{\alpha}_t^\star$ require access to a dense $d\times K$ Gaussian codebook, whose memory and computation grow with both $d$ and $K$. Practical implementations must therefore restrict $K$ as the latent dimension grows, reducing the set of directions available to represent the residual. Second, Turbo-DDCM’s lexicographic rank and unrank operations rely on a dense precomputed table of binomial coefficients for each $(K,M)$ pair, which becomes expensive to construct and store at large $K$. Our method targets both scaling bottlenecks directly.

\vspace{-2pt}
\subsection{Randomized Hadamard transform}
\vspace{-2pt}
\label{sec:bg:hadamard}

The dense-codebook bottleneck above motivates a dictionary that is both orthogonal and admits fast implicit matrix-vector products.
Let $d$ be a power of two and $\mathbf{H}\in\mathbb{R}^{d\times d}$ denote the normalized Hadamard matrix, via Sylvester’s recursion and scaled by $\smash{d^{-1/2}}$. Its entries are $\smash{\pm d^{-1/2}}$ and
\begin{equation}
\label{eq:hadamard}
\mathbf{H} = \mathbf{H}^\top, \qquad \mathbf{H}^\top\mathbf{H} = \mathbf{I}_d, \qquad \mathbf{H}^2 = \mathbf{I}_d .
\end{equation}
Thus, the columns of $\mathbf{H}$ form an orthonormal basis of $\mathbb{R}^d$. Crucially, this basis never needs to be materialized: the fast Walsh-Hadamard transform (FWHT)~\citep{fino1976unified} computes $\mathbf{H}\mathbf{v}$, \ie the correlations of $\mathbf{v}$ with all Hadamard basis vectors simultaneously, using $O(d\log d)$ computation and $O(d)$ memory.

To randomize the basis, let $\boldsymbol{\varepsilon}\in\{-1,+1\}^d$ contain i.i.d.~Rademacher samples, define $\mathbf{D}=\diag(\boldsymbol{\varepsilon})$, and consider the dictionary $\mathbf{D}\mathbf{H}$ rather than $\mathbf{H}$. On the analysis side, multiplying a vector by the transposed matrix $\mathbf{H}\mathbf{D}$ uniformly disperses its energy across all coordinates and reduces coherence with structured inputs~\citep{tropp2011improved}. Since $\mathbf{D}^\top\mathbf{D}=\mathbf{I}_d$,
\begin{equation}
\label{eq:DH-orthonormal}
(\mathbf{D}\mathbf{H})^\top(\mathbf{D}\mathbf{H}) = \mathbf{H}^\top\mathbf{D}^\top\mathbf{D}\mathbf{H} = \mathbf{H}^\top\mathbf{H} = \mathbf{I}_d,
\end{equation}
so that the randomized basis $\mathbf{D}\mathbf{H}$ remains exactly orthonormal, while multiplication by either $\mathbf{D}\mathbf{H}$ or its transpose requires only one FWHT and sign modulation. It can consequently be regenerated from a shared seed and applied with $O(d\log d)$ computation and $O(d)$ memory, without storing any of its $\smash{d^2}$ entries.

%% file: sections/method.tex
\vspace{-2pt}
\section{Method}
\vspace{-2pt}
\label{sec:method}

We address both scaling bottlenecks from Sec.~\ref{sec:background}: the dense Gaussian codebook and the combinatorial bitstream computation cost. First, we replace the Gaussian codebook with an implicit randomized Hadamard dictionary, enabling fast residual projection and reconstruction without materializing the dictionary (Sec.~\ref{sec:method:selection}). Then, we introduce a scalable table-free, grouped lexicographic bitstream implementation (Sec.~\ref{sec:method:rate}). Finally, we extend the structured dictionary with multiple randomized Hadamard bases to increase the available candidate directions in the low-rate regime (Sec.~\ref{sec:method:overcomplete}).

\vspace{-2pt}
\subsection{Structured Hadamard dictionary}
\vspace{-2pt}
\label{sec:method:selection}

Turbo-DDCM accesses its codebook $\boldsymbol{C}_t$ in two places -- forming the correlations $\mathbf{u}_t = \boldsymbol{C}_t^\top\mathbf{r}_t$ and reconstructing $\boldsymbol{C}_t\boldsymbol{\alpha}_t^\star$. Both require only multiplication by the codebook and its transpose, which the Hadamard transform (Sec.~\ref{sec:bg:hadamard}) provides implicitly without materializing the dictionary.

To simplify the exposition, assume that the dimension $d$ of the residual $\mathbf{r}_t$ is a power of two (otherwise it is zero-padded to the closest power of two).
At denoising step $t$, we generate a Rademacher diagonal matrix $\mathbf{D}_t$ from the seed shared by the encoder and decoder and define
\begin{equation}
\label{eq:dictionary}
\mathbf{A}_t := \mathbf{D}_t\mathbf{H} \in \mathbb{R}^{d \times d}
\end{equation}
whose columns form an orthonormal basis by~\eqref{eq:DH-orthonormal}. A new $\mathbf{D}_t$ is generated at every step, providing a different randomized basis along the trajectory without storing or transmitting the dictionary. We approximate $\mathbf{r}_t$ in this basis by
\begin{equation}
\label{eq:our-problem}
\boldsymbol{\alpha}_t^{\star} = \argmin_{\boldsymbol{\alpha} \in \mathbb{R}^{d}} \left\| \mathbf{r}_t - \mathbf{A}_t\boldsymbol{\alpha} \right\|_2^2 \quad \text{s.t.} \quad \|\boldsymbol{\alpha}\|_0 = M, \quad \alpha_i \in \mathcal{V} \cup \{0\} \ \ \forall i,
\end{equation}
which is the sparse least-squares equation from~\eqref{eq:sparse-ls} but with the Gaussian codebook $\boldsymbol{C}_t$ replaced by a randomized Hadamard codebook $\mathbf{A}_t$, with $K = d$.
Because $\mathbf{A}_t$ is an orthonormal basis, the objective decouples coordinate-wise via
\begin{equation}
\label{eq:separable}
\left\|\mathbf{r}_t - \mathbf{A}_t\boldsymbol{\alpha}\right\|_2^2 = \left\|\mathbf{A}_t(\mathbf{u}_t - \boldsymbol{\alpha})\right\|_2^2 = \left\|\mathbf{u}_t - \boldsymbol{\alpha}\right\|_2^2, \qquad \mathbf{u}_t = \mathbf{A}_t^\top \mathbf{r}_t.
\end{equation}
Consequently, the thresholding rule of~\eqref{eq:thresholding}, which was only an approximation for the nearly orthogonal Gaussian atoms of Turbo-DDCM, solves~\eqref{eq:our-problem} exactly: the support is $\mathcal{I}_t := \TopM(|\mathbf{u}_t|)$ and each selected entry is rounded to the nearest level in $\mathcal{V}$~\citep{vaisman2026turbo}.
The selected combination is rescaled and injected in place of $\mathbf{z}_t$ in~\eqref{eq:ddpm}, by substituting $\mathbf{A}_t$ instead of $\boldsymbol{C}_t$ in~\eqref{eq:std-norm}. Encoding and decoding algorithms are provided in App.~\ref{app:algorithms}.

The encoder requires one FWHT and top-$M$ selection; the decoder requires one FWHT to reconstruct the selected noise. Both run in $O(d\log d)$ time and $O(d)$ memory without storing a dictionary, compared with the $O(Kd)$ Gaussian codebook of Turbo-DDCM (App.~\ref{app:codebook_cost}).
Note that although residual-dependent selection means the resulting noise is not itself Gaussian, the same is true of Turbo-DDCM's selected Gaussian combinations -- Sec.~\ref{sec:experiments} evaluates reconstruction quality directly.

\vspace{-2pt}
\subsection{Bitstream optimization}
\vspace{-2pt}
\label{sec:method:rate}
Turbo-DDCM encodes the $M$ selected indices through their lexicographic rank, requiring $\lceil \log_2 \tbinom{K}{M} \rceil$ bits for the selected index set. However, its implementation relies on a large precomputed table of binomial coefficients whose memory and construction costs scale poorly with dictionary size $K$, becoming impractical in high-dimensional regimes (we refer to App.~\ref{app:bitstream_protocol} for further discussion). Instead, we propose a table-free lexicographic coding scheme that is tractable at large dimensions, while preserving the same combinatorial coding principle.

We propose two table-free mechanisms (App.~\ref{app:bitstream_protocol}): $(i)$~an incremental recurrence advances the required binomial coefficients using exact Pascal-rule updates, avoiding their repeated computation and storage, and $(ii)$~a hockey-stick-identity~\citep{jones1996generalized} formulation expresses each selected index's contribution using directly evaluated binomial coefficients and unranks via binary search. The former is effective for dense selections, while the latter is particularly efficient when $M \ll d$.

Furthermore, for a dictionary of size $K$, we divide the candidate atoms into $G$ groups of size $K_G := K/G$ and encode each group independently. We first transmit the count vector $(m_1, \ldots, m_G)$, which indicates the number of atoms in each group ($\sum_{g=1}^G m_g = M$), using a combinatorial stars-and-bars code~\citep{feller1950introduction}. This code uses $\log_2 \lceil \tbinom{M+G-1}{G-1} \rceil$ bits to describe which of the $\tbinom{M+G-1}{G-1}$ possible allocations has been used. This is followed by the local lexicographic rank and coefficient bits within each group. This trades a small rate overhead for $G$ smaller combinatorial operations that can be evaluated independently and in parallel. The resulting bitrate is
\begin{equation}
\label{eq:bpp}
\mathrm{BPP} = \frac{S\left(L_{\mathrm{counts}} + \sum_{g=1}^{G}\left(\left\lceil \log_2 \tbinom{K_G}{m_g} \right\rceil + m_g C\right)\right)}{\text{number of pixels}}, \qquad L_{\mathrm{counts}} = \left\lceil\log_2\tbinom{M+G-1}{G-1}\right\rceil,
\end{equation}
where $S$ is the number of bit-carrying sampling steps, which is $T-N-1$ for DDPM and $T$ for consistency-models (see App.~\ref{app:sampling_formulations}). The single Hadamard dictionary has $K = d$ atoms, giving $K_G = d/G$. Setting $G = 1$ recovers a single flat lexicographic rank.

\vspace{-2pt}
\subsection{Extension: overcomplete Hadamard dictionary}
\vspace{-2pt}
\label{sec:method:overcomplete}

At small $M$, \ie in the low-bitrate regime, reconstruction quality becomes particularly sensitive to the alignment between $\mathbf{r}_t$ and the available basis directions. We therefore extend the single Hadamard basis into an overcomplete dictionary by drawing $B$ independent Rademacher diagonals $\mathbf{D}_{t,1},\ldots,\mathbf{D}_{t,B}$ from the shared seed. This gives $B$ independently randomized orthogonal bases $\mathbf{A}_{t,b}:=\mathbf{D}_{t,b}\mathbf{H}$, which we concatenate as
\begin{equation}
\label{eq:overcomplete-dictionary}
\mathbf{A}_t^{\mathrm{oc}} := [\mathbf{A}_{t,1} \;\cdots\; \mathbf{A}_{t,B}] \in \mathbb{R}^{d \times Bd}.
\end{equation}
The encoder can therefore select from $Bd$ candidate directions rather than $d$ of a single basis.

Each $\mathbf{A}_{t,b}$ is individually orthogonal by~\eqref{eq:DH-orthonormal}, but atoms belonging to different randomized bases are not mutually orthogonal to each other: $\mathbf{A}_{t,b}^\top\mathbf{A}_{t,b'}$ is a dense matrix for $b\neq b'$, not the identity. Global top-$M$ thresholding is therefore no longer an exact solution of~\eqref{eq:our-problem}, however we still use this approach, as in Turbo-DDCM. 
Each basis contributes its own scores $\mathbf{u}_{t,b} := \mathbf{A}_{t,b}^\top\mathbf{r}_t$, at the cost of one FWHT, and the encoder selects the $M$ largest-magnitude coefficients across all $B$ score vectors. On the synthesis side, once the support is selected, the least-squares coefficients on the support can be recovered using the pseudo-inverse of the sub-matrix corresponding to the chosen atoms. In practice, however, we find that due to the coefficient quantization, using the analysis coefficients instead of the least-squares ones leads to an almost identical noise direction (the comparison is provided in App.~\ref{app:trans_pinv}).

An index now identifies one of $Bd$ atoms, which we encode using the grouped protocol of Sec.~\ref{sec:method:rate} by treating each randomized basis as an individual group ($G = B$). Because each of the $B$ bases contains $d$ atoms, the per-group size matches the basis dimension ($K_G = d$). The richer dictionary therefore incurs additional transform and index costs; at a fixed bit budget the extension must select fewer atoms than \oursshort{}. Appendix~\ref{app:overcomplete_hadamard} gives the construction in full and an exact and efficient way to compute the global top-$M$.

%% file: sections/experiments.tex
\vspace{-2pt}
\section{Application to image compression}
\vspace{-2pt}
\label{sec:experiments}
At high latent dimensions, Turbo-DDCM faces a practical trade-off -- a complete codebook provides the full set of candidate directions but incurs large memory and computational costs, while capping the codebook reduces these costs at the expense of fidelity due to restricted subspace span. Here, we demonstrate that \oursshort{} avoids this trade-off at FLUX scale ($d=2^{18}$), matching the quality of the complete codebook while retaining the efficiency of the capped variant.

\myparagraph{Implementation details.} We evaluate two backbones: Stable Diffusion (SD)~2.1~\citep{rombach2022high} at $512\times512$ resolution, where the latent dimension is $d = 2^{14}$, and FLUX at $1024\times1024$ resolution, where $d = 2^{18}$. For both, we run $T=30$ steps, use $C=1$ bit per coefficient, rely on the incremental recurrence (Pascal-rule) implementation for bitstream coding, and sweep $M$ to trace out the rate-distortion-perception tradeoff~\citep{blau2019rethinking}. We use $G=1$ groups for SD~2.1 and $G=8$ for FLUX. Appendix~\ref{app:image_experiment_configurations} lists additional implementation details and the hyperparameters for each method.

\myparagraph{Evaluation.} For SD~2.1 we report results on the Kodak24~\citep{kodak1999} dataset and additionally evaluate on the DIV2K~\citep{agustsson2017ntire} and CLIC2020~\citep{CLIC2020} datasets in App.~\ref{app:additional_results}; all three datasets are resized and center-cropped to $512\times512$. For FLUX we evaluate on DIV2K, resized and center-cropped to $1024\times1024$. We report round-trip (compression and decompression) time for every method on a single NVIDIA A6000 GPU. Distortion is measured by PSNR and perceptual quality by FID~\citep{heusel2017gans}, computed on $64\times64$ patches following \citet{mentzer2020high}. For training-free diffusion-based methods, we further report the excess memory required for dictionary storage, as well as the disk storage required by Turbo-DDCM's precomputed binomial tables. Appendix~\ref{app:additional_results} additionally reports SSIM~\citep{wang2004image}, LPIPS~\citep{zhang2018unreasonable}, and separate encoding and decoding times.

\begin{figure}[t]
    \vspace{-.5em}
    \centering
    \begin{subfigure}{\linewidth}
        \centering
        \includegraphics[width=\linewidth]{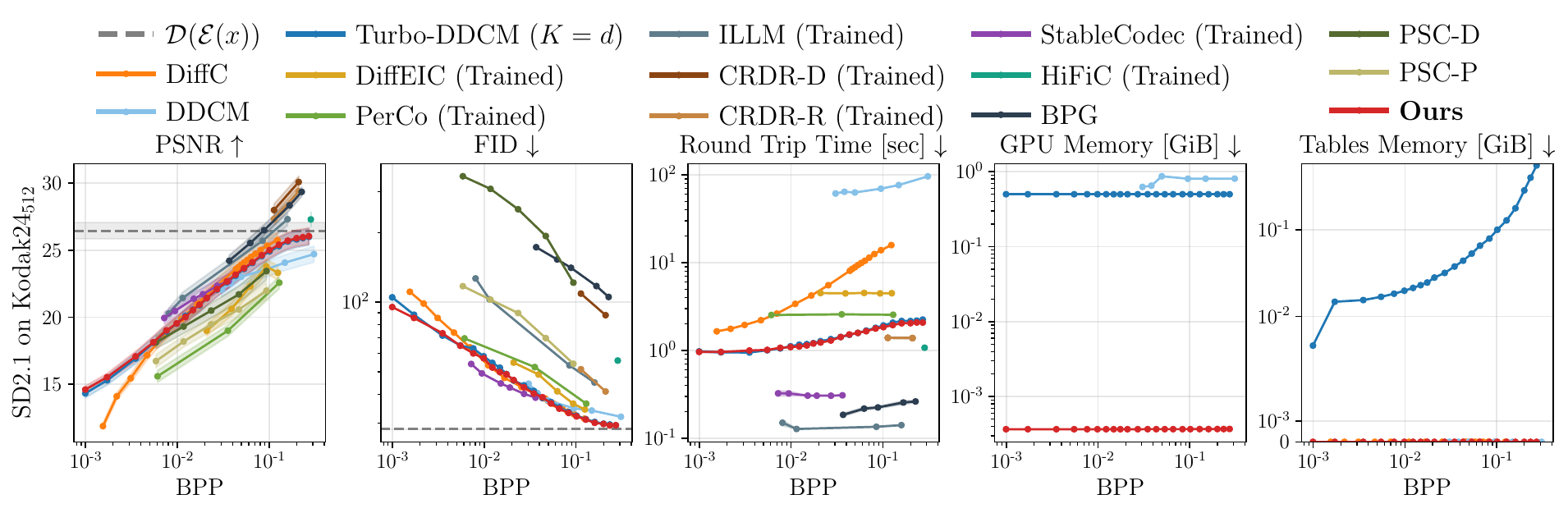}
    \end{subfigure}
    
    \vspace{-.5em}
    
    \begin{subfigure}{\linewidth}
        \centering
        \includegraphics[width=\linewidth]{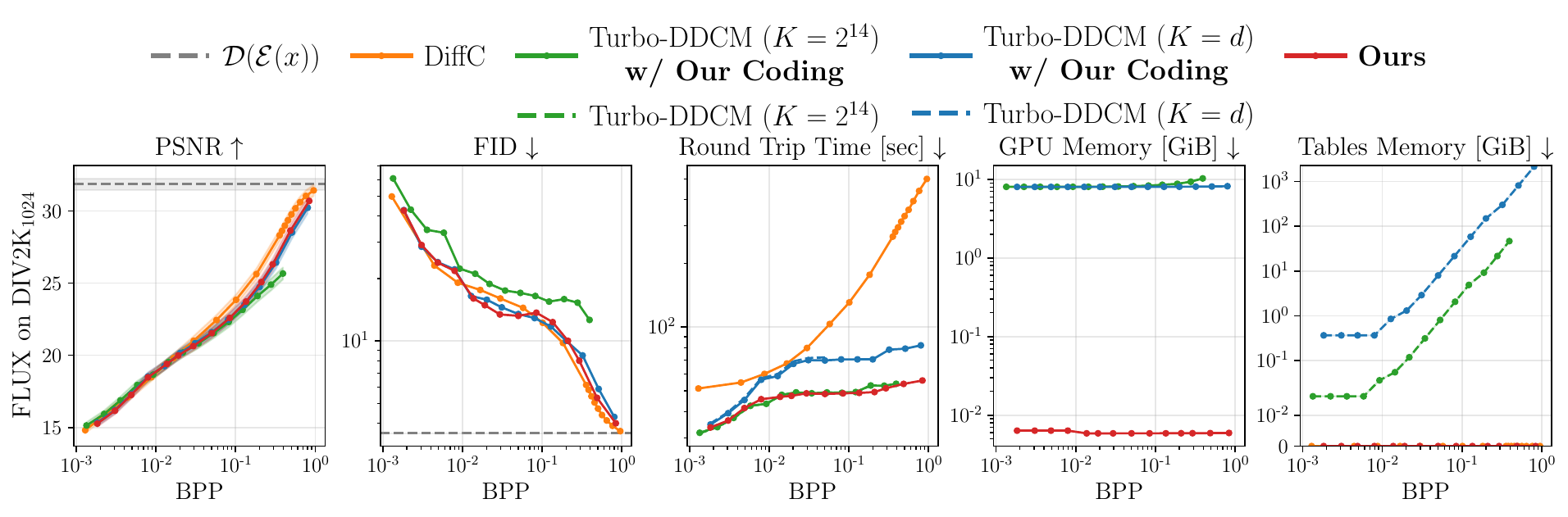}
    \end{subfigure}
    \vspace{-1.5em}
    \caption{\textbf{Quantitative comparison of image compression methods.} We compare \oursshort{} with training-free diffusion-based and other compression methods in terms of rate, distortion, and perception. Distortion is measured by PSNR, perceptual quality by FID (presented in log-scale), and runtime by round-trip encoding and decoding time. GPU memory refers to the excess memory used by training-free methods for dictionary storage, while tables memory refers to disk storage required by Turbo-DDCM's precomputed combinatorial tables. The first row corresponds to training-free methods utilizing the SD~2.1 backbone ($512 \times 512$, $d = 2^{14}$) and other compression methods. The second row corresponds to FLUX backbone ($1024 \times 1024$, $d = 2^{18}$). The dashed gray vertical line denotes the VAE encoder-decoder bound. \oursshort{} achieves competitive results on the rate-distortion-perception tradeoff, all the while being the fastest training-free method, avoiding explicit codebook materialization. Error bars represent the standard error of the mean (SEM).}
    \label{fig:main_images_quantitative_comparisons}
    \vspace{-.5em}
\end{figure}

\begin{figure}[t]
    \centering
    \begin{subfigure}{\linewidth}
        \centering
        \includegraphics[width=\linewidth]{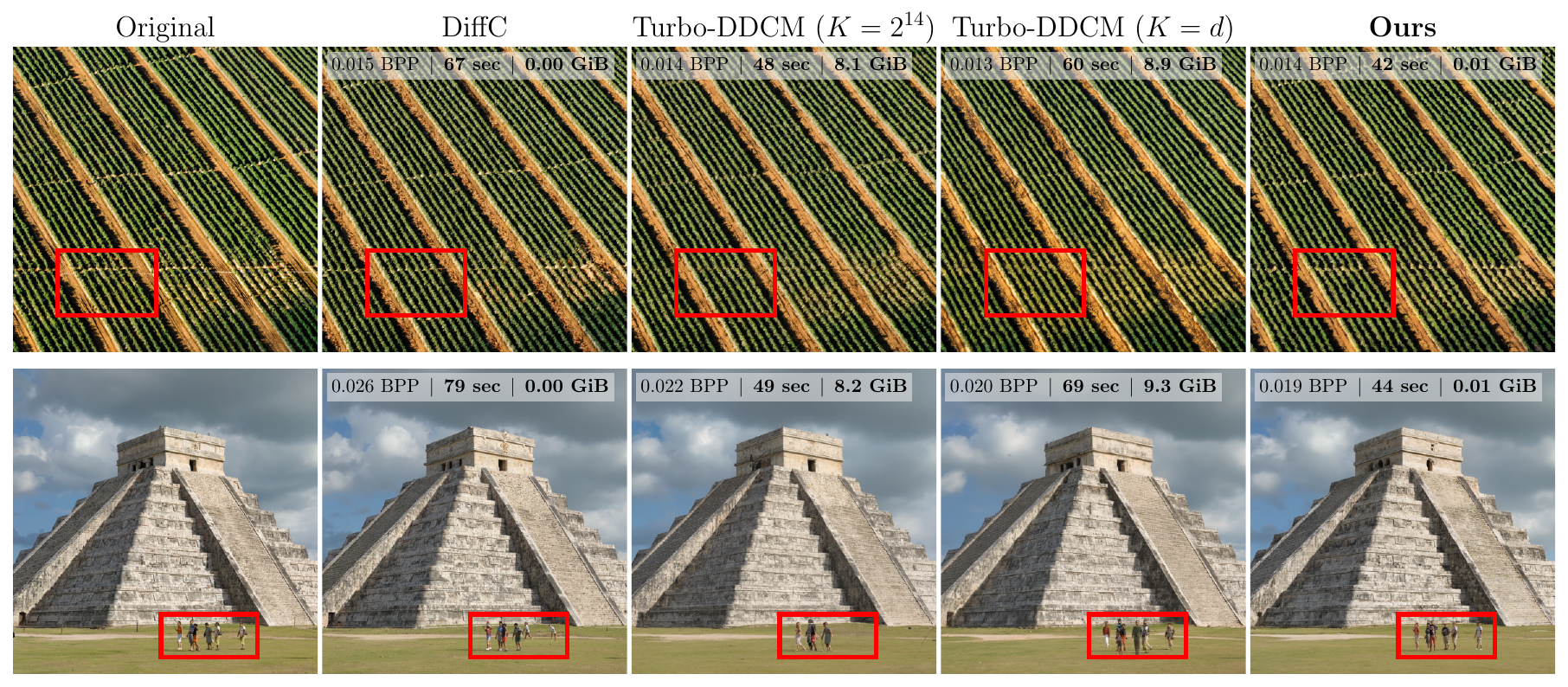}
    \end{subfigure}
    \vspace{-1em}
    \caption{\textbf{Qualitative comparison of training-free diffusion-based image compression methods (FLUX on DIV2K).} \oursshort{} is the fastest and preserves visual quality and fidelity to the original image, while avoiding the memory overhead of an explicit codebook.}
    \label{fig:main_images_qualitative_comparisons}
    \vspace{-.5em}
\end{figure}

\myparagraph{Competing methods.} We compare against every training-free diffusion-based codec that shares our exact backbone and requires no training: DiffC~\citep{theis2022lossy,vonderfecht2025lossy}, DDCM~\citep{ohayon2025compressed}, Turbo-DDCM~\citep{vaisman2026turbo} utilizing a capped codebook ($K=2^{14}$) as well as a complete one ($K=d$), and PSC~\citep{elata2025psc} in its distortion- and perception-oriented variants (PSC-D, PSC-P). We additionally compare against codecs trained or fine-tuned specifically for compression -- DiffEIC~\citep{li2024toward}, PerCo~\citep{korber2024perco}, ILLM~\citep{muckley2023improving}, CRDR~\citep{iwai2024controlling} (its D and R variants), StableCodec~\citep{zhang2025stablecodec}, and HiFiC~\citep{mentzer2020high}. We also compare against the classical codec BPG~\citep{bellard2018bpg}. For FLUX we compare to training-free methods only: DiffC and Turbo-DDCM using both codebook sizes. Note that PSC is omitted from the runtime comparisons (Figs.~\ref{fig:main_images_quantitative_comparisons} and~\ref{fig:supp_images_quantitative_comparisons_complementary}) due to its slow runtime.

\myparagraph{Results.}
Figure~\ref{fig:main_images_quantitative_comparisons} shows that \oursshort{} closely matches complete-codebook Turbo-DDCM ($K=d$) and DiffC in terms of PSNR and FID across the evaluated bitrate regime. On FLUX, capping Turbo-DDCM's codebook below the latent dimension $d$ reduces quality, while using the complete codebook substantially increases memory and runtime. At matched bitrate on SD~2.1, \oursshort{} and Turbo-DDCM ($K=d$) both achieve a $1.3$\,s round trip time, compared with $5.5$\,s for DiffC and $61$\,s for DDCM. At FLUX scale, \oursshort{} reduces the round trip time from $66$\,s for Turbo-DDCM ($K=d$) to $44$\,s, while avoiding explicit codebook materialization.
These results show that the structured dictionary retains the quality benefit of a complete codebook while scaling more favorably in memory and computation. Figure~\ref{fig:main_images_qualitative_comparisons} provides corresponding FLUX reconstructions, including the round trip time and the total required memory for each method (GPU and tables memory), and App.~\ref{app:additional_results} reports additional datasets and metrics.

%% file: sections/application_to_streaming.tex
\vspace{-2pt}
\section{Application to streaming video compression}
\vspace{-2pt}
\label{sec:vid_streaming_app}
Beyond the high-dimensional setting considered for images, causal AR video compression introduces two additional challenges: codec throughput must keep pace with generation, and compression must continue beyond the backbone's finite training context without losing temporal consistency. We adapt our framework to the CF AR video backbone and address the latter challenge with a re-encoded-tail KV-cache refresh, enabling long-form, close to real-time video compression while retaining context across segment boundaries.

\myparagraph{Implementation details.}
We evaluate our method using CF at $832\times 480$ resolution. We apply the residual-coding procedure to each causal latent chunk, using CF's consistency trajectory in place of the DDPM trajectory used for images. We run $T=4$ consistency steps and use the $B=8$ overcomplete (OC) dictionary (Sec.~\ref{sec:method:overcomplete}). Unlike DDPM, in the consistency formulation we choose to use no deterministic tail ($N=0$), so all $T$ steps perform noise selection and carry bits. We further encode the initial-step noise using a proxy residual, instead of a random noise. We refer to App.~\ref{app:sampling_formulations} for additional details about the consistency formulation and the initial-step proxy residual. We use our efficient hockey-stick-identity lexicographic implementation for all applicable CF-based baselines to isolate codec overhead.

\myparagraph{Re-encoded-tail KV-cache refresh.}
CF is trained on contexts of $21$ latent frames ($81$ pixel frames), corresponding to $7$ chunks of $3$ latent frames each. Extending the AR cache beyond this regime leads to severe degradation, while simply resetting the cache at each continuation boundary discards temporal context and introduces discontinuities (see ablation in App.~\ref{app:additional_results}). We therefore refresh the cache at every segment boundary by re-encoding the last $9$ reconstructed pixel frames into a fresh KV-cache prefix. Since these reconstructed frames are available identically to both encoder and decoder, the refreshed prefix requires no additional transmitted bits. The next segment is then compressed conditioned on this prefix, preserving recent temporal context while keeping the cache within the model's training regime. The first newly coded chunk after a refresh is particularly difficult because it has the least generated context available. We therefore use a larger selection budget $M_{\mathrm{start}}\geq M$ for this chunk, after which compression returns to the budget of $M$ indices per step. Appendix~\ref{app:video_experiment_configurations} provides additional details of the re-encoded-tail mechanism, $M_{\mathrm{start}}$ values, and lists the hyperparameters for each method.

\begin{figure}[t]
    \vspace{-.5em}
    \centering
    \begin{subfigure}{\linewidth}
        \centering
        \includegraphics[width=\linewidth]{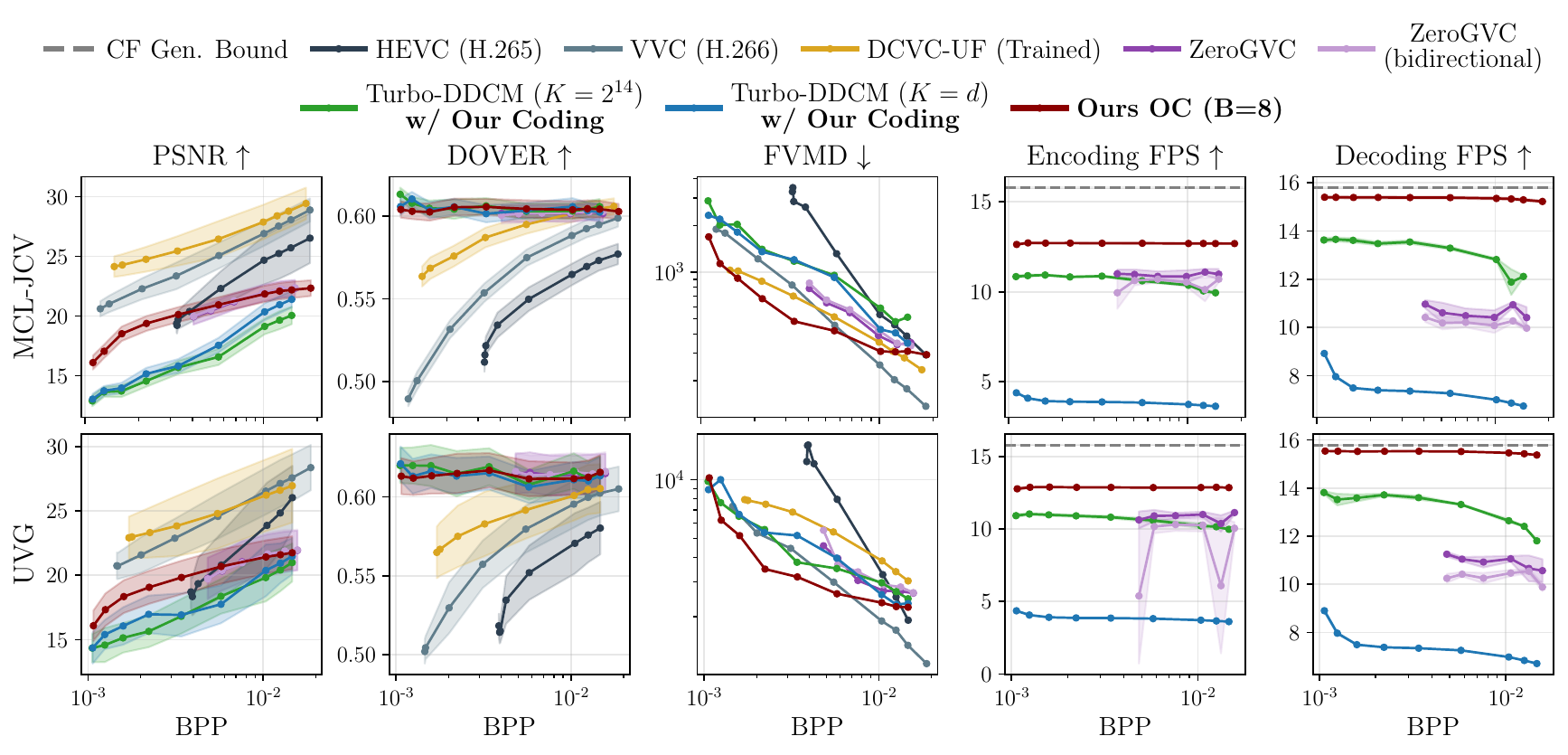}
    \end{subfigure}
    \vspace{-1.5em}
    \caption{\textbf{Quantitative comparison of video compression methods.} We compare \oursshort{} OC ($B=8$) with training-free methods using the same CF backbone and other compression baselines in terms of rate, distortion and perception. Distortion is measured by PSNR, and perceptual quality is measured by DOVER and FVMD (presented in log-scale). Encoding and decoding FPS report compression and decompression throughput, and are compared among methods using the CF backbone.
    \oursshort{} achieves competitive rate-distortion-perception performance among the CF-based training-free methods while providing the highest measured encoding and decoding throughput, without codebooks materialization. Error bars represent the standard error of the mean (SEM).}
    \label{fig:main_video_quantitative_comparisons}
    \vspace{-.5em}
\end{figure}

\begin{figure}[t]
    \centering
    \begin{subfigure}{\linewidth}
        \centering
        \includegraphics[width=\linewidth]{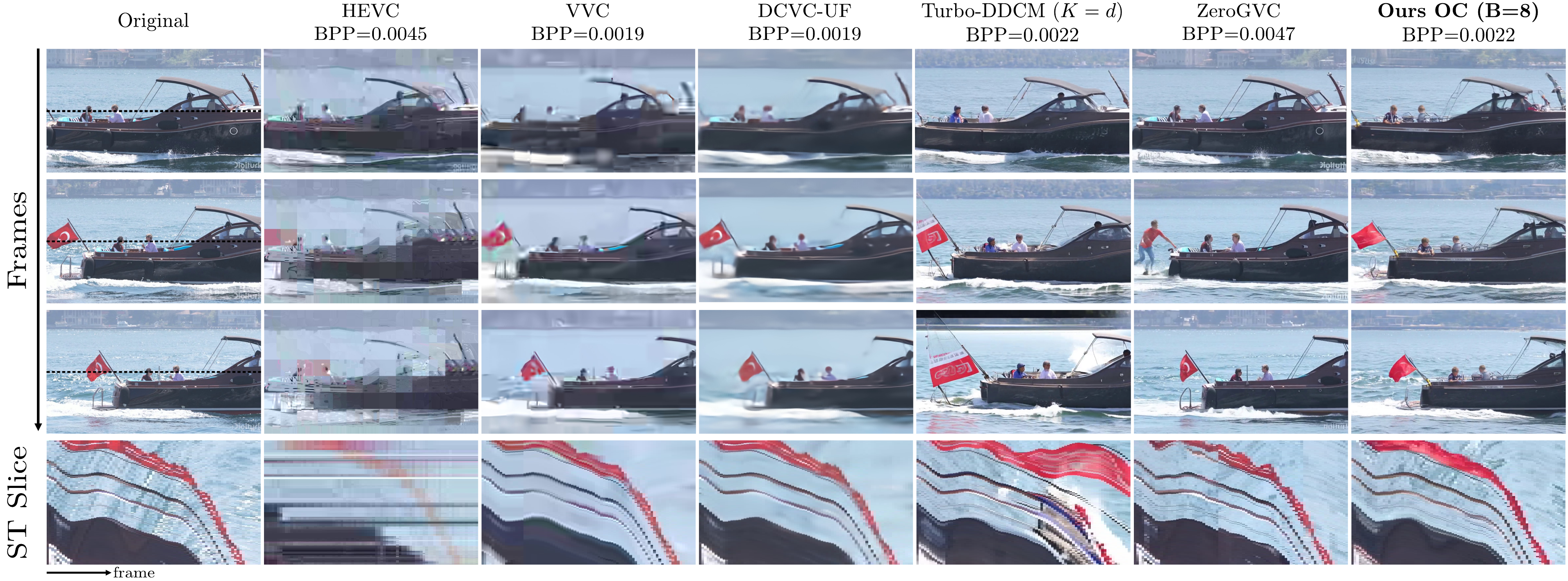}
    \end{subfigure}
    \vspace{-1.5em}
    \caption{\textbf{Qualitative comparison of video-compression methods.} The example is taken from the UVG dataset at $832\times480$ resolution. Our method preserves source content and temporal consistency while providing the highest measured throughput among the evaluated CF-based methods. The space-time (ST) slice, which shows the temporal evolution of the pixels marked by the dashed line, reveals temporal discontinuities at ZeroGVC’s I-frame boundaries.}
    \label{fig:main_video_qualitative_comparisons}
    \vspace{-.5em}
\end{figure}

\myparagraph{Evaluation.} We evaluate on UVG~\citep{mercat2020uvg} and MCL-JCV~\citep{wang2016mcl}, resized and center-cropped to $832\times 480$, and in App.~\ref{app:additional_results} we further evaluate HEVC Class B/C/D~\citep{bossen2010common}. We measure distortion by PSNR, and perceptual quality with DOVER~\citep{wu2023exploring} and FVMD~\citep{liu2024frchet}; additional metrics are reported in App.~\ref{app:additional_results}. Encoding and decoding throughput are measured on a single NVIDIA H200 GPU and compared directly among methods leveraging the same CF backbone.

\myparagraph{Competing methods.} We compare against the classical codecs HEVC~\citep{sullivan2012overview} and VVC~\citep{bross2021overview}, the recent neural codec DCVC-UF~\citep{li2026ultra}, and training-free generative compression methods using the same CF backbone: ZeroGVC~\citep{gao2026zerogvc}, and Turbo-DDCM with both a capped codebook ($K=2^{14}$) and a complete one ($K=d$). As no official CF implementation is available for ZeroGVC or Turbo-DDCM, we implement these baselines following their respective formulations.

\myparagraph{Results.} 
Figure~\ref{fig:main_video_quantitative_comparisons} presents the quantitative results. HEVC, VVC, and DCVC-UF achieve higher PSNR but substantially weaker perceptual quality as measured by DOVER and FVMD. Among the training-free methods sharing the CF backbone, ZeroGVC achieves comparable PSNR but weaker temporal and motion consistency as captured by FVMD, together with lower throughput. Both Turbo-DDCM variants achieve lower PSNR and perceptual quality, while the complete-codebook variant additionally incurs substantially lower throughput. In contrast, \oursshort{} provides competitive rate-distortion-perception performance while achieving the highest measured encoding and decoding throughput among the evaluated CF-based methods. Figure~\ref{fig:main_video_qualitative_comparisons} illustrates the corresponding visual and temporal behavior, including discontinuities at segment boundaries in ZeroGVC. Additional quantitative and qualitative results are provided in App.~\ref{app:additional_results}.

%% file: sections/conclusion.tex
\vspace{-2pt}
\section{Conclusion}
\vspace{-2pt}

We introduced \oursshort{}, a training-free generative codec that replaces Turbo-DDCM's dense Gaussian codebooks with implicit randomized Hadamard dictionaries and its precomputed lexicographic table with scalable table-free, group-factorized coding. This enables efficient compression at large latent dimensions: on FLUX, \oursshort{} matches the fidelity of a complete Turbo-DDCM codebook without materializing it, while retaining the runtime efficiency of a capped codebook. For video, we combine an overcomplete Hadamard dictionary with a re-encoded-tail KV-cache refresh to adapt the framework to the CF AR backbone, enabling long-form, close to real-time streaming compression. \oursshort{} achieves competitive rate-distortion-perception performance while providing the highest measured throughput among the evaluated methods sharing the CF backbone.

%% file: sections/ai_use.tex
\subsection*{AI use statement}
Generative AI tools were used in this paper for minor text polishing to improve readability, and for editing software code.
We have not used AI to: generate synthetic data sets, help develop theoretical models or conceptual frameworks, formulate mathematical claims, provide critical ingredients for proving mathematical claims, assist in the writing of proofs, propose or refine hypotheses, design or provide feedback on research methodology or experiments, implement methods from scratch, assist with translation, clean and reformat dataset, support qualitative and thematic data analysis, and interpret results.
Additionally, we used generative AI tools to create and modify figures, to improve the style.
We have reviewed all AI-assisted work. We take responsibility for the final content of this work, including text, claims or artifacts produced with the aid of generative AI.

%% file: sections/ethics_reproducibility.tex
\subsection*{Ethics statement}
This work builds upon pre-trained generative models, and thus inherits the broader ethical considerations associated with their use. Such models may reflect or amplify societal biases present in the training data, and their outputs could be misinterpreted or misused in sensitive applications. In addition, our approach involves large-scale diffusion and consistency models, which carry the potential risk of being repurposed for harmful or malicious purposes. We emphasize that our contributions are intended solely for advancing research in generative modeling.

\subsection*{Reproducibility statement}
Our results can be fully reproduced by implementing the method as described in Sec.~\ref{sec:method}, and summarized as pseudo-code in App.~\ref{app:algorithms}, with the hyperparameters as provided in Apps.~\ref{app:image_experiment_configurations} and~\ref{app:video_experiment_configurations}, and the bitstream implementation details as provided in App.~\ref{app:bitstream_protocol}.

\subsection*{Acknowledgments}
The Miriam and Aaron Gutwirth Memorial Fellowship supported the research of OR and VK.
This research was supported by the Israel Science Foundation (grant no. 2318/22) and by the Joseph and Sadie Danciger Chair in Engineering.

%% file: supplementary/supplementary.tex
\beginsupplement
\appendix
\mySuppTitle
\input{supplementary/experiment_configurations}

\input{supplementary/experiment_configurations_video}
\input{supplementary/additional_results}
\input{supplementary/sampling_formulations}
\input{supplementary/bitstream_protocol}
\input{supplementary/codebook_cost}
\input{supplementary/overcomplete_hadamard}
\input{supplementary/algorithms}

%% file: supplementary/experiment_configurations.tex
\section{Experiment image compression configurations}
\label{app:image_experiment_configurations}

We report the exact configurations used for the image experiments in Figs.~\ref{fig:main_images_quantitative_comparisons},~\ref{fig:supp_images_quantitative_comparisons}, and~\ref{fig:supp_images_quantitative_comparisons_complementary}. Training-free methods are evaluated with the same pretrained backbones reported in the main paper, while trained baselines use their publicly released checkpoints without additional training or fine-tuning.

\subsection{SD~2.1 (Kodak24, DIV2K, CLIC2020)}

All three datasets use the same method set and the same per-method hyperparameters.

\myparagraph{Training-free methods.}
\begin{itemize}
  \item \textbf{\oursshort{}}: $T=30$ (latent dimension $d=2^{14}$), $M$ ranging from $22$ to $300$ ($22$ points); the DDIM tail length $N + 1$ follows Turbo-DDCM's heuristic for setting it automatically per $M$.
  \item \textbf{Turbo-DDCM ($K=d=2^{14}$)}: $T=30$, using the same $22$-point $M$-grid and per-$M$ deterministic-tail schedule as \oursshort{}. We compute lexicographic rank and unrank operations using our table-free recurrence (App.~\ref{app:bitstream_protocol}) rather than Turbo-DDCM's original precomputed table, so runtime comparisons isolate residual-selection/codebook cost rather than avoidable table-construction overhead.
  \item \textbf{DDCM}: hyperparameters from the original paper -- $T=1000$ for $K\in\{256,1024,8192\}$ with $M=1$, and $T=1000$ for $K=2048$ with $M\in\{2,3,6\}$ and $C=3$.
  \item \textbf{DiffC}: the schedulers, $D_{\mathrm{KL}}$ schedule, and reconstruction timesteps suggested by the original authors, unchanged from their released configuration.
  \item \textbf{PSC-D / PSC-P}: rank $12$ and $2^i, i\in\{4,5,6,7,8\}$ iterations; PSC-D averages $64$ posterior samples per reconstruction, and PSC-P uses the authors' own nested-posterior-sampling estimator.
\end{itemize}

\myparagraph{Trained baselines.}
\begin{itemize}
  \item \textbf{DiffEIC}: the released \texttt{1\_2\_X} checkpoints for $\texttt{X}\in\{1,2,4,8,16\}$.
  \item \textbf{PerCo (SD)}: three publicly available checkpoints, with the default hyperparameters.
  \item \textbf{StableCodec}: the released checkpoints \texttt{stablecodec\_ftX} for $\texttt{X} \in \{2,4,8,12,24,32\}$.
  \item \textbf{HiFiC}: the official checkpoint for the low-quality regime, run from a PyTorch reimplementation rather than the official TensorFlow release.
  \item \textbf{ILLM}: the released \texttt{msillm\_quality\_X} for $\texttt{X}\in\{2,3\}$ and \texttt{msillm\_quality\_vloY} for $\texttt{Y}\in\{1,2\}$ checkpoints.
  \item \textbf{CRDR-D / CRDR-R}: QF $qf\in\{0,1,2,3,4\}$, with $\beta\in\{0, 3.84\}$ for CRDR-D and CRDR-R.
\end{itemize}

\myparagraph{Classical codec.}
\begin{itemize}
  \item \textbf{BPG}: QP values $q \in \{51,48,46,42,40\}$.
\end{itemize}

\subsection{FLUX (DIV2K)}

Four methods are evaluated at FLUX scale ($d=2^{18}$): \oursshort{}, Turbo-DDCM at $K=2^{14}$ and at $K=d$, and DiffC.

\begin{itemize}
  \item \textbf{\oursshort{}}: $T=30$, $M$ ranging from $43$ to $4641$ ($14$ rate points). We use the same step scheduler as SD~2.1, following \citet{vonderfecht2025lossy}, but subsampled non-uniformly across the sweep: at low bitrate the $30$ steps crowd toward the trajectory's higher-noise timesteps, moving to an evenly-spaced subsampling only once the bitrate is high enough. The DDIM tail length $N$ follows the same principle, shrinking from a larger value at low bitrate toward zero as $M$ grows.
  \item \textbf{Turbo-DDCM ($K=2^{14}$)}: the dense-codebook baseline capped well below $d$, with $N$ matched per rate point to hit the same $14$ bpp targets as \oursshort{}.
  \item \textbf{Turbo-DDCM ($K=d=2^{18}$)}: we use the same rate targets as the capped-codebook variant. Materializing the full $d\times K$ Gaussian codebook would require approximately $128$\,GiB in FP16 (App.~\ref{app:codebook_cost}), so we generate and process it in blocks of $2^{14}$ columns.
  \item \textbf{DiffC}: the schedulers, $D_{\mathrm{KL}}$ schedule, and reconstruction timesteps suggested by the original authors, unchanged from their released FLUX configuration.
\end{itemize}

%% file: supplementary/experiment_configurations_video.tex
\section{Experiment video compression configurations}
\label{app:video_experiment_configurations}

We report the configurations used for all video experiments in Sec.~\ref{sec:vid_streaming_app}, including the re-encoded-tail KV-cache geometry and the rate-matched configurations for each training-free method. The same configurations are used across all evaluated video datasets.

\myparagraph{Re-encoded-tail geometry.}
Each CF chunk contains $3$ latent frames with flattened dimension $d=299{,}520$. For the Hadamard transform we zero-pad each chunk to $d=2^{19}$ and use $T=4$ bit-carrying consistency steps. The first video segment generates $21$ latent frames ($81$ pixel frames). Each continuation generates $18$ new latent frames, preceded by a $3$ latent-frame prefix obtained by re-encoding the last $9$ reconstructed pixel frames from the preceding segment. Because these frames are already available at both encoder and decoder, the refreshed prefix requires no transmitted bits.

At the first newly coded chunk after each refresh, we use a larger budget $M_{\mathrm{start}}\ge M$ to compensate for the reduced generated context. We choose $M_{\mathrm{start}}$ per method such that this chunk carries at least $20{,}000$ bits across its four selection steps: $M_{\mathrm{start}}=390$ for \oursshort{}, $308$ for \oursshort{} OC, $724$ for Turbo-DDCM ($K=2^{14}$), and $420$ for Turbo-DDCM ($K=d$). Once the steady-state $M$ reaches the corresponding floor, we set $M_{\mathrm{start}}=M$.

\myparagraph{Training-free methods.}
All training-free generative methods are evaluated using the same CF backbone at its native $832\times480$ resolution.
\begin{itemize}
  \item \textbf{\oursshort{}}: $10$ reference bitrates with $M$ ranging from $16$ to $2048$ over the padded support $d=2^{19}$. We use hockey-stick ranking and binary-search unranking (App.~\ref{app:bitstream_protocol}).
  \item \textbf{Overcomplete dictionary \oursshort{}} ($B=8$): $10$ configurations rate-matched to the corresponding single-basis \oursshort{} points. The OC selection budgets are smaller than the reference $M$ values because each selected atom is drawn from the larger $Bd$-atom dictionary; the exact matched pairs are listed in App.~\ref{app:overcomplete_hadamard}.
  \item \textbf{Turbo-DDCM ($K=2^{14}$)}: configurations are rate-matched to the same target bit budgets using a deterministic Gaussian codebook; search and reconstruction use FP16.
  \item \textbf{Turbo-DDCM ($K=d=299{,}520$)}: configurations are rate-matched in the same way using the complete Gaussian codebook. Because the complete codebook cannot be stored alongside the backbone, it is processed in blocks of $2^{14}$ columns.
  \item \textbf{ZeroGVC}: we use independent $32$-pixel-frame GOPs. Each GOP begins with a OneDC~\citep{xue2026one} ($\lambda=1$) I-frame that resets the autoregressive cache; subsequent latents use framewise CF with a deterministic $K=2^{14}$ Turbo-DDCM codebook and $4$ denoising steps. We sweep $M$ from $64$ to $1024$ over $6$ configurations and evaluate both bidirectional and non-bidirectional streaming variants. GOPs are repeat-padded to a valid $1+4q$ length and cropped back after decoding.
\end{itemize}

\myparagraph{Trained baselines.}
\begin{itemize}
  \item \textbf{DCVC-UF}: \href{https://github.com/microsoft/dcvc}{Microsoft's official implementation} and checkpoints, with $\mathrm{QP\text{-}I}=\mathrm{QP\text{-}P}\in\{0,\dots,63\}$ found via a two-pass search targeting each rate point.
\end{itemize}

\myparagraph{Classical codecs.}
\begin{itemize}
  \item \textbf{HEVC}: FFmpeg's \texttt{libx265} implementation, with two-pass average-bitrate encoding targeting each rate point.
  \item \textbf{VVC}: \href{https://github.com/fraunhoferhhi/vvenc}{Fraunhofer's VVenC/VVdeC implementation}, with the same two-pass average-bitrate targeting.
\end{itemize}

\clearpage

%% file: supplementary/additional_results.tex
\section{Additional results}
\label{app:additional_results}

\subsection{Image compression quantitative comparison}
Figure~\ref{fig:supp_images_quantitative_comparisons} extends the SD~2.1 comparison of Fig.~\ref{fig:main_images_quantitative_comparisons} to DIV2K and CLIC2020. Across both datasets, \oursshort{} follows the same trend observed on Kodak24: it achieves competitive rate-distortion-perception performance while avoiding explicit Gaussian-codebook storage and retaining low round-trip time among the evaluated training-free methods.

\begin{figure}[htbp]
    \centering
    \begin{subfigure}{\linewidth}
        \centering
        \includegraphics[width=\linewidth]{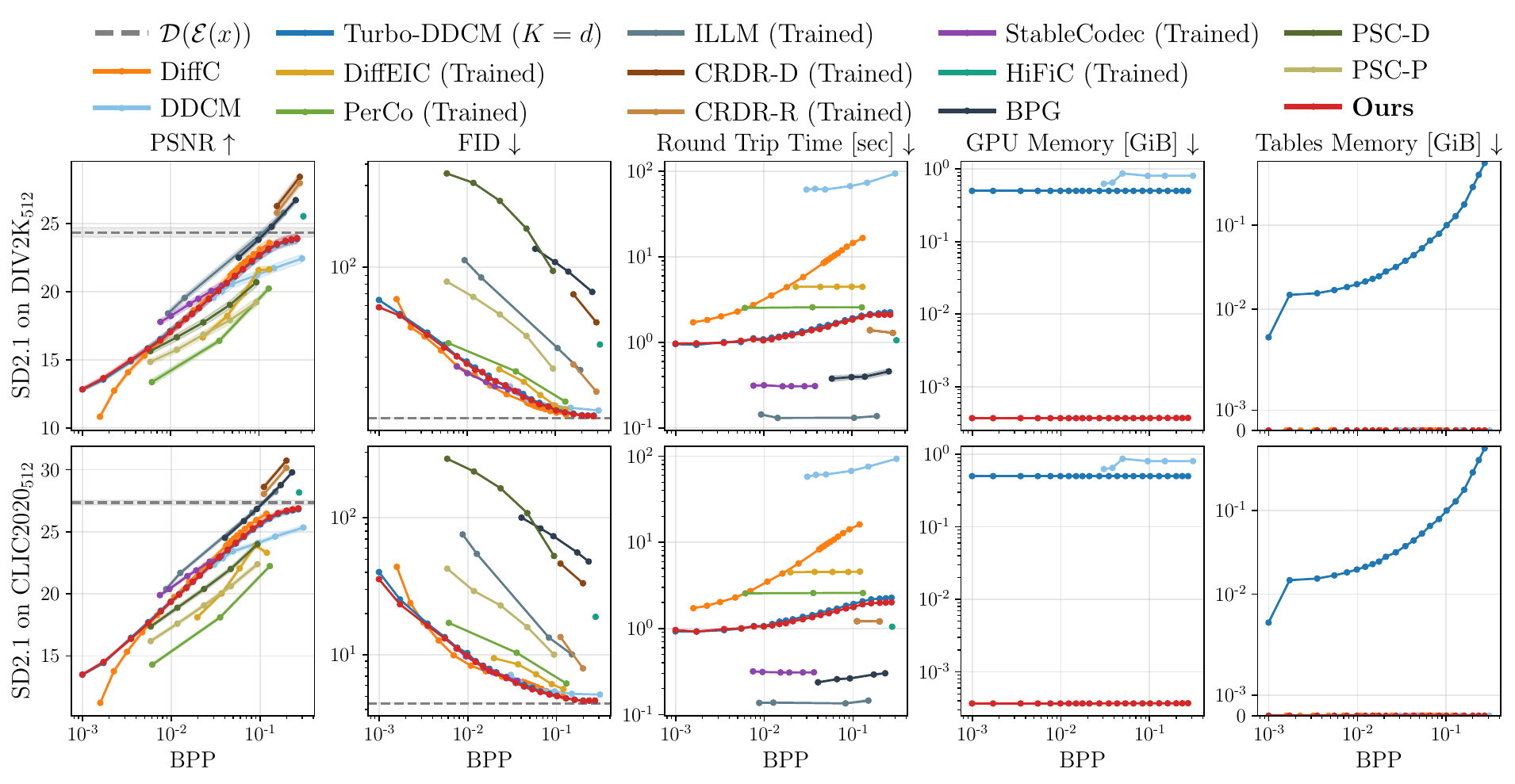}
    \end{subfigure}
    \vspace{-1.5em}
    \caption{\textbf{Additional quantitative comparison of image compression methods (SD2.1).} We compare \oursshort{} with training-free diffusion-based and other compression methods in terms of rate, distortion and perception. Distortion is measured by PSNR, perceptual quality is measured by FID (presented in log-scale), and runtime is measured by round-trip encoding and decoding time. GPU memory refers to the excess memory used by training-free methods for dictionary storage, while tables memory denotes disk storage required by Turbo-DDCM's precomputed combinatorial tables. The first row corresponds to DIV2K dataset, and the second row corresponds to CLIC2020 dataset. Both datasets are resized and center-cropped to resolution of $512 \times 512$, corresponding to $d = 2^{14}$. The dashed gray vertical line corresponds to the encoder-decoder bound. \oursshort{} achieves competitive results on the rate-distortion-perception tradeoff, all the while being the fastest training-free method, avoiding explicit codebook materialization. Error bars represent the standard error of the mean (SEM).}
    \label{fig:supp_images_quantitative_comparisons}
\end{figure}

Figure~\ref{fig:supp_images_quantitative_comparisons_complementary} reports complementary metrics for the image experiments in Figs.~\ref{fig:main_images_quantitative_comparisons} and~\ref{fig:supp_images_quantitative_comparisons}. We include SSIM and LPIPS together with separate encoding, decoding, and round-trip times and the memory requirements of the training-free methods. PSC is omitted from the timings and memory consumption due to its slow runtime.

\begin{figure}[htbp]
    \centering
    \begin{subfigure}{\linewidth}
        \centering
        \includegraphics[width=\linewidth]{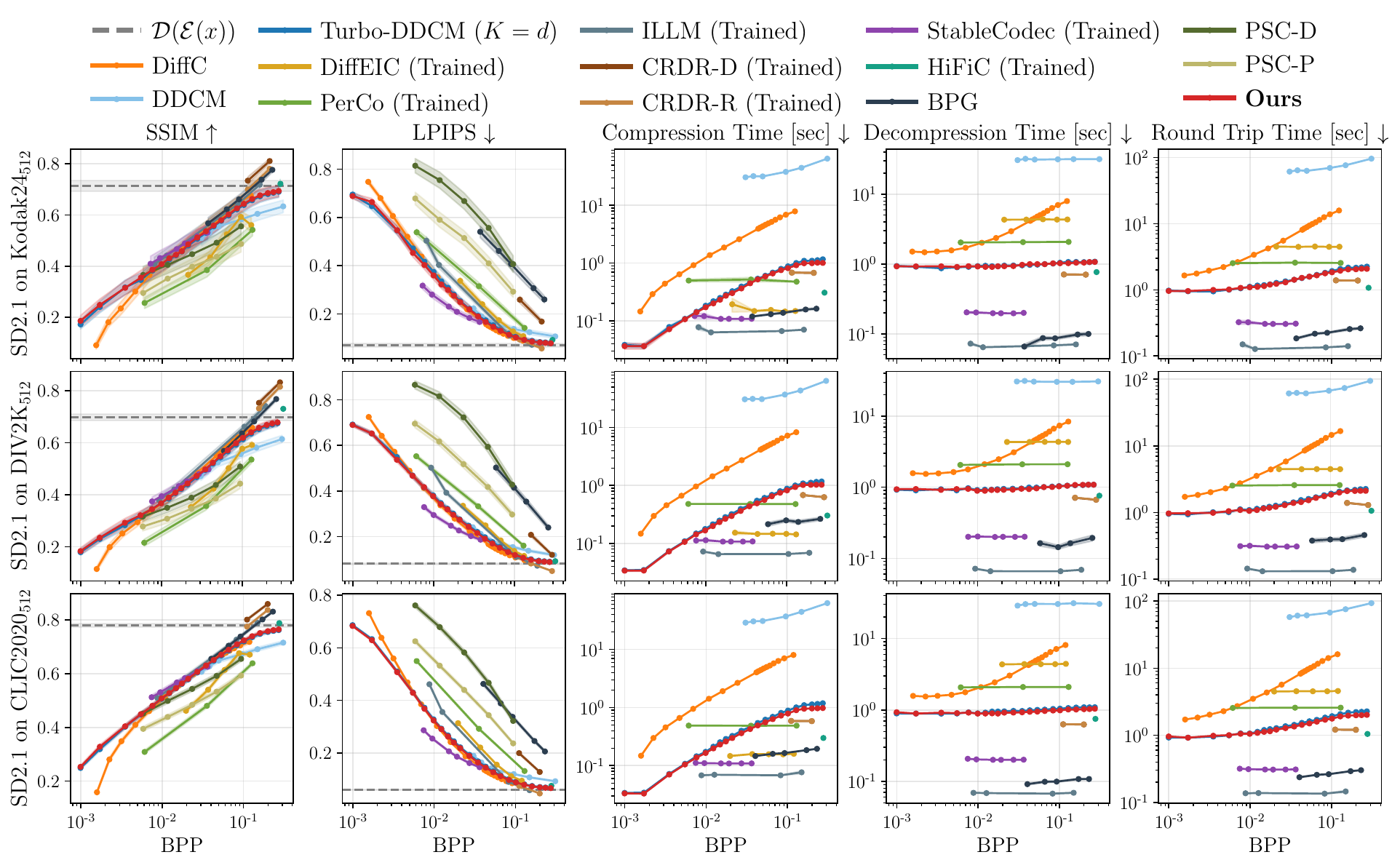}
    \end{subfigure}
    
    \vspace{-.5em}
    
    \begin{subfigure}{\linewidth}
        \centering
        \includegraphics[width=\linewidth]{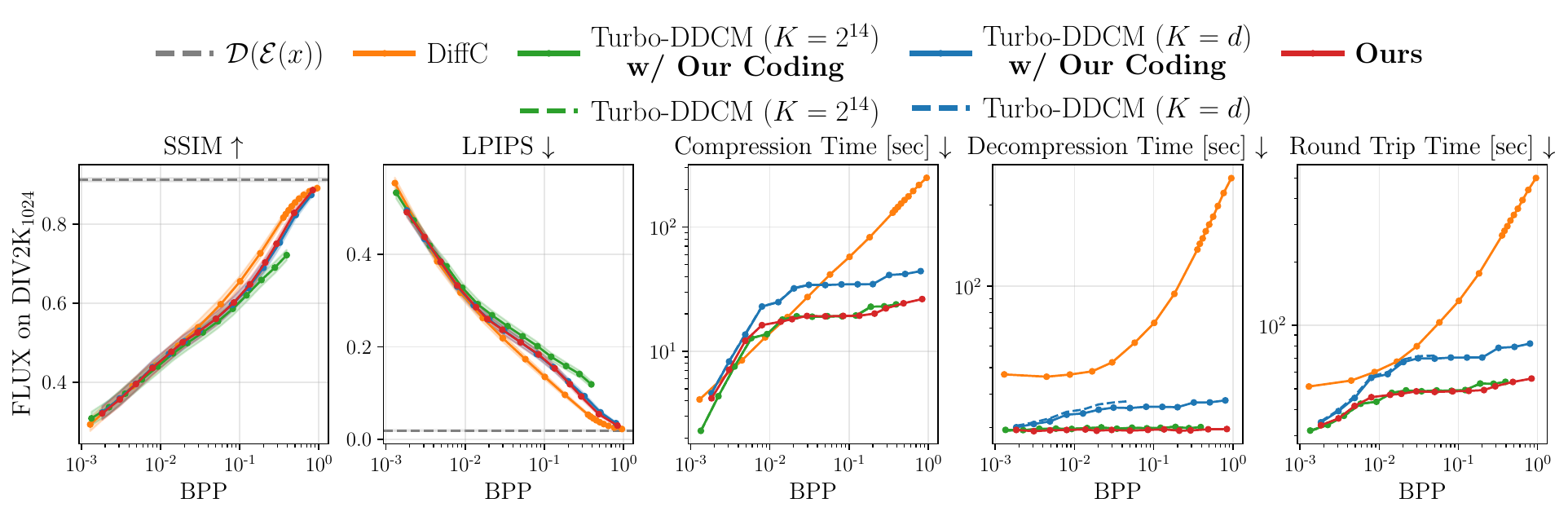}
    \end{subfigure}
    \vspace{-2em}
    \caption{\textbf{Complementary quantitative comparison of Figs.~\ref{fig:main_images_quantitative_comparisons} and~\ref{fig:supp_images_quantitative_comparisons} of image compression methods.} We compare \oursshort{} with training-free diffusion-based and other compression methods in terms of rate, distortion and perception. Distortion is measured by SSIM and LPIPS, runtime is measured by encoding (compression) time, decoding (decompression) time, and round-trip time (compression and decompression time). PSC is omitted due to its slow runtime. The first, second and third rows correspond to Kodak24, DIV2K and CLIC2020 datasets, respectively, evaluated on SD2.1 ($512 \times 512$, corresponding to $d = 2^{14}$). The fourth row corresponds to DIV2K dataset, evaluated on FLUX ($1024 \times 1024$, corresponding to $d = 2^{18}$). The dashed gray vertical line corresponds to the encoder-decoder bound. \oursshort{} achieves competitive quality, at the same time of being the fastest training-free method, without materializing the dictionary. Error bars represent the standard error of the mean (SEM).}
    \label{fig:supp_images_quantitative_comparisons_complementary}
\end{figure}

\FloatBarrier
\clearpage

\subsection{Image compression qualitative comparison}
Figures~\ref{fig:supp_images_qualitative_comparisons_kodak24},~\ref{fig:supp_images_qualitative_comparisons_div2k} and~\ref{fig:supp_images_qualitative_comparisons_clic2020} present additional SD~2.1 qualitative comparisons between diffusion-based training-free methods on Kodak24, DIV2K, and CLIC2020 datasets (at $512 \times 512$), respectively. Figure~\ref{fig:supp_images_qualitative_comparisons_div2k_flux} presents the same, but for FLUX on DIV2K dataset (at $1024 \times 1024$).

\begin{figure}[htbp]
    \centering
    \begin{subfigure}{\linewidth}
        \centering
        \includegraphics[width=\linewidth]{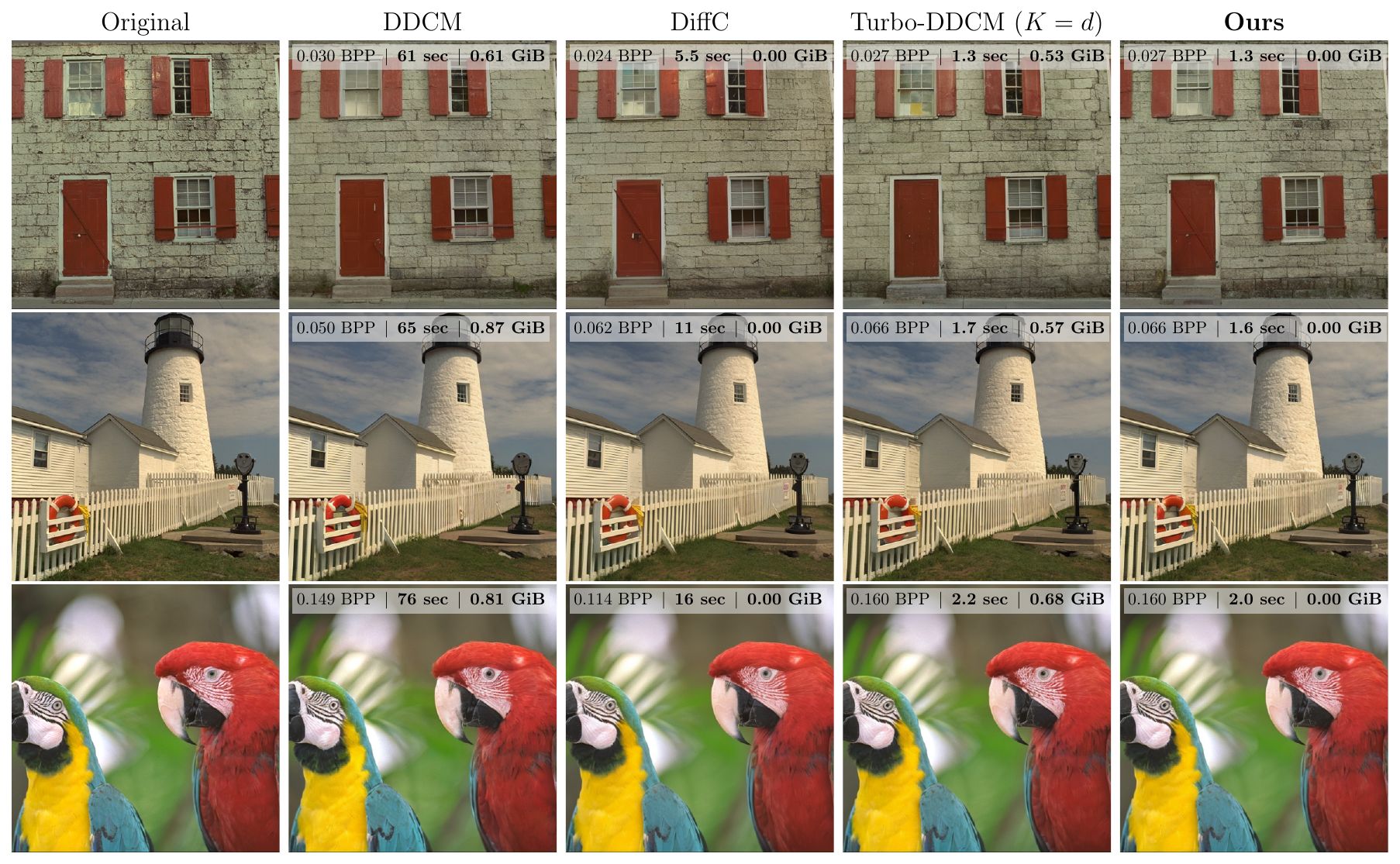}
    \end{subfigure}
    \vspace{-1.5em}
    \caption{\textbf{Qualitative comparison of training-free diffusion-based image compression methods (SD2.1).} The images are taken from the Kodak24 dataset (resized and center-cropped to resolution of $512 \times 512$).}
    \label{fig:supp_images_qualitative_comparisons_kodak24}
\end{figure}

\begin{figure}[htbp]
    \centering
    \begin{subfigure}{\linewidth}
        \centering
        \includegraphics[width=\linewidth]{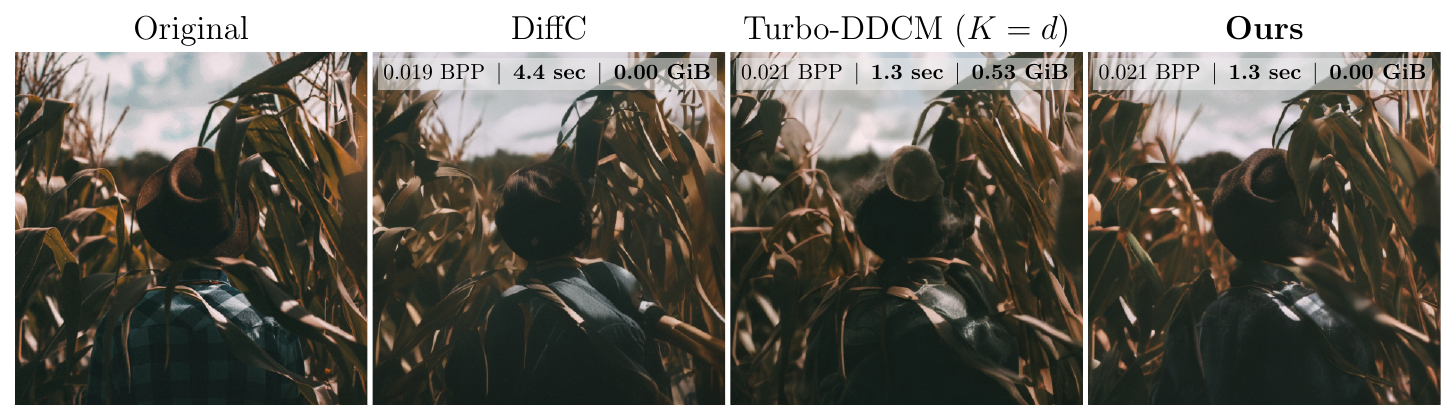}
    \end{subfigure}
    
    \vspace{-.5em}
    
    \begin{subfigure}{\linewidth}
        \centering
        \includegraphics[width=\linewidth]{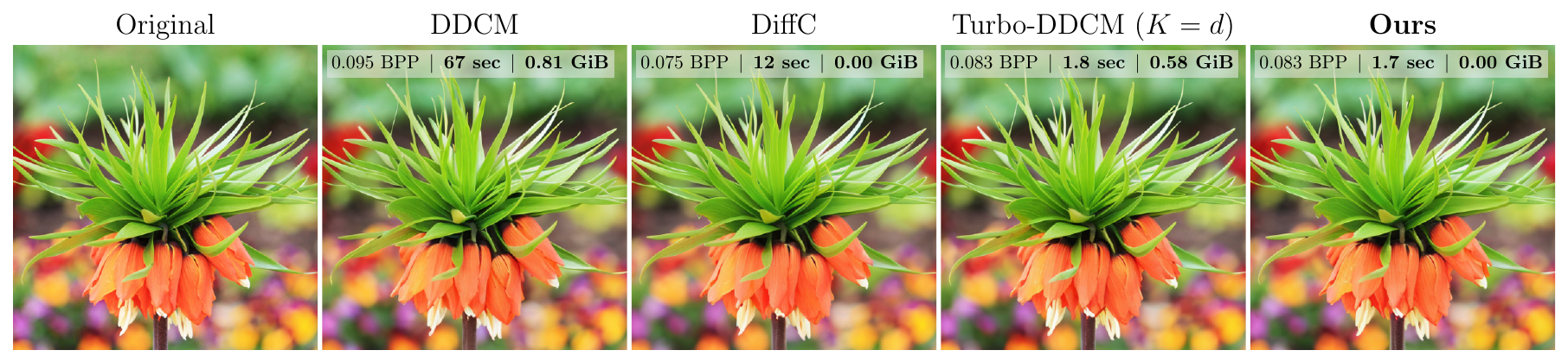}
    \end{subfigure}
    \vspace{-1.5em}
    \caption{\textbf{Qualitative comparison of training-free diffusion-based image compression methods (SD2.1).} The images are taken from the DIV2K dataset (resized and center-cropped to resolution of $512 \times 512$).}
    \label{fig:supp_images_qualitative_comparisons_div2k}
\end{figure}

\begin{figure}[htbp]
    \centering
    \begin{subfigure}{\linewidth}
        \centering
        \includegraphics[width=\linewidth]{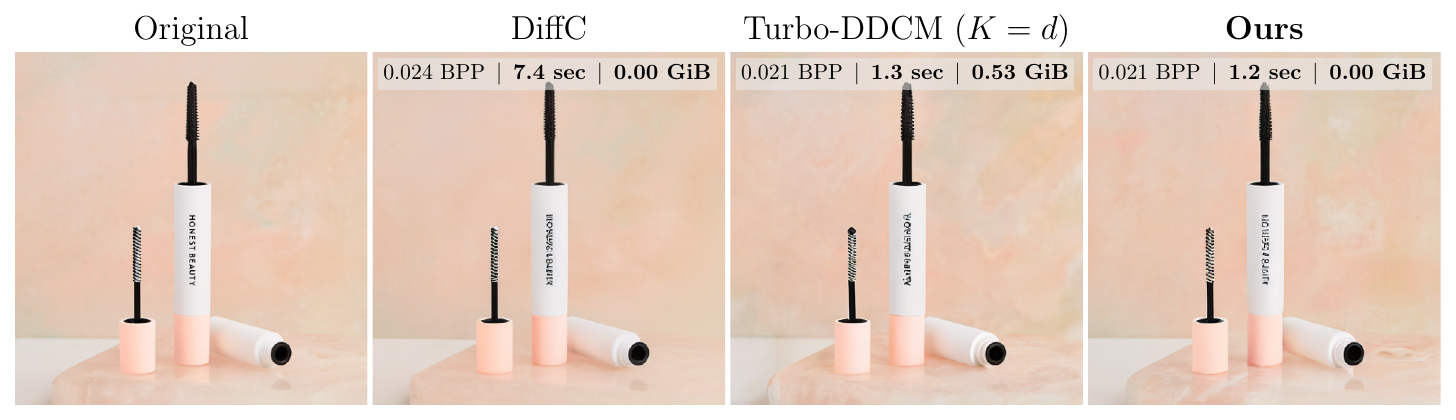}
    \end{subfigure}
    
    \vspace{-.5em}
    
    \begin{subfigure}{\linewidth}
        \centering
        \includegraphics[width=\linewidth]{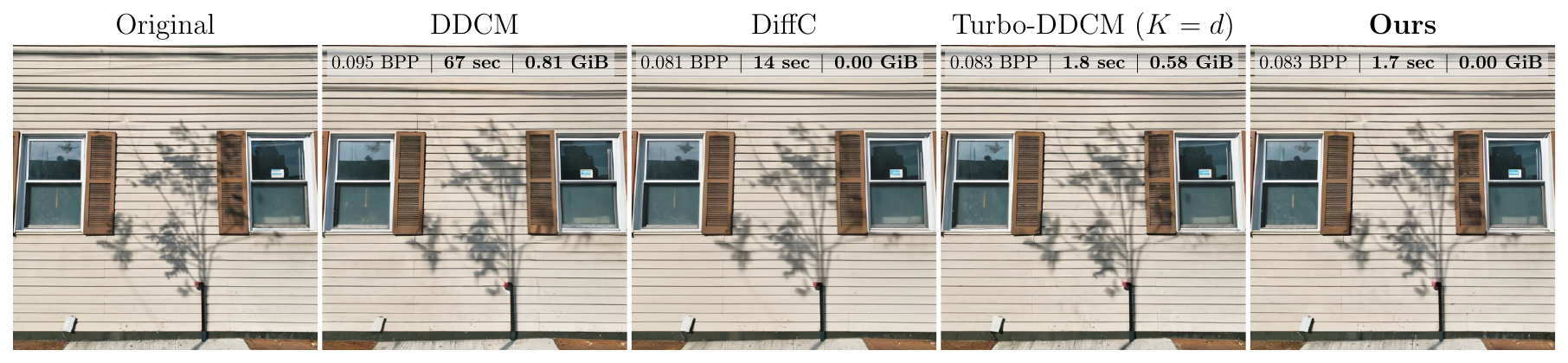}
    \end{subfigure}
    \vspace{-1.5em}
    \caption{\textbf{Qualitative comparison of training-free diffusion-based image compression methods (SD2.1).} The images are taken from the CLIC2020 dataset (resized and center-cropped to resolution of $512 \times 512$).}
    \label{fig:supp_images_qualitative_comparisons_clic2020}
\end{figure}

\begin{figure}[htbp]
    \centering
    \begin{subfigure}{\linewidth}
        \centering
        \includegraphics[width=\linewidth]{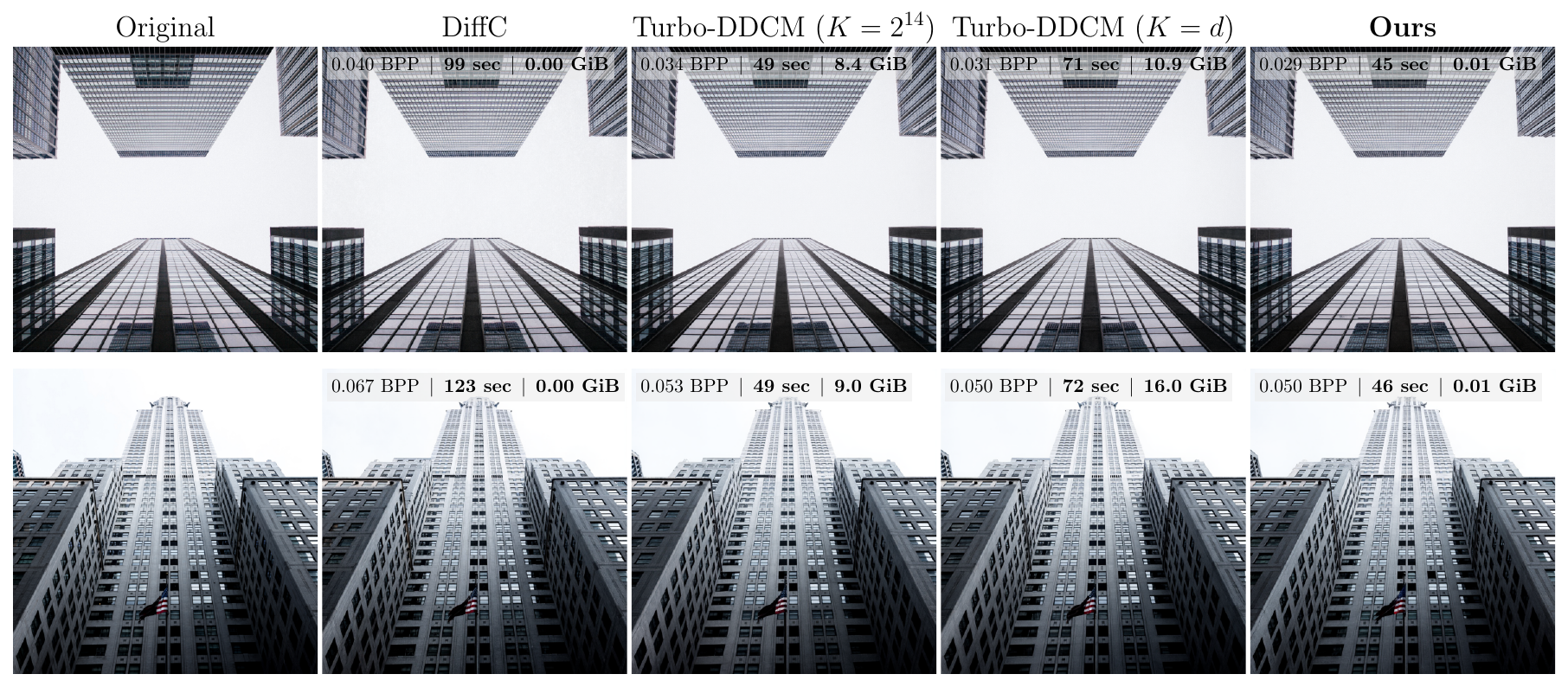}
    \end{subfigure}
    \vspace{-1.5em}
    \caption{\textbf{Qualitative comparison of training-free diffusion-based image compression methods (FLUX).} The images are taken from the DIV2K dataset (resized and center-cropped to resolution of $1024 \times 1024$).}
    \label{fig:supp_images_qualitative_comparisons_div2k_flux}
\end{figure}

\FloatBarrier
\clearpage

\subsection{Video compression quantitative comparison}
Figure~\ref{fig:supp_video_quantitative_comparisons} extends the video comparison of Fig.~\ref{fig:main_video_quantitative_comparisons}, to HEVC Classes B, C and D datasets, resized and center-cropped to resolution of $832\times480$ resolution. Because HEVC-D is natively $416\times240$ resolution, we upsample it to $832\times480$ to match the backbone's native operating resolution, but evaluate all quality metrics at the original $416\times240$ resolution rather than the upsampled size. The results follow the main paper trend: \oursshort{} OC achieves a competitive rate-distortion-perception performance to that of competing methods, while providing the highest measured throughput among the evaluated methods using the CF backbone, without materializing the dictionary.

\begin{figure}[htbp]
    \centering
    \begin{subfigure}{\linewidth}
        \centering
        \includegraphics[width=\linewidth]{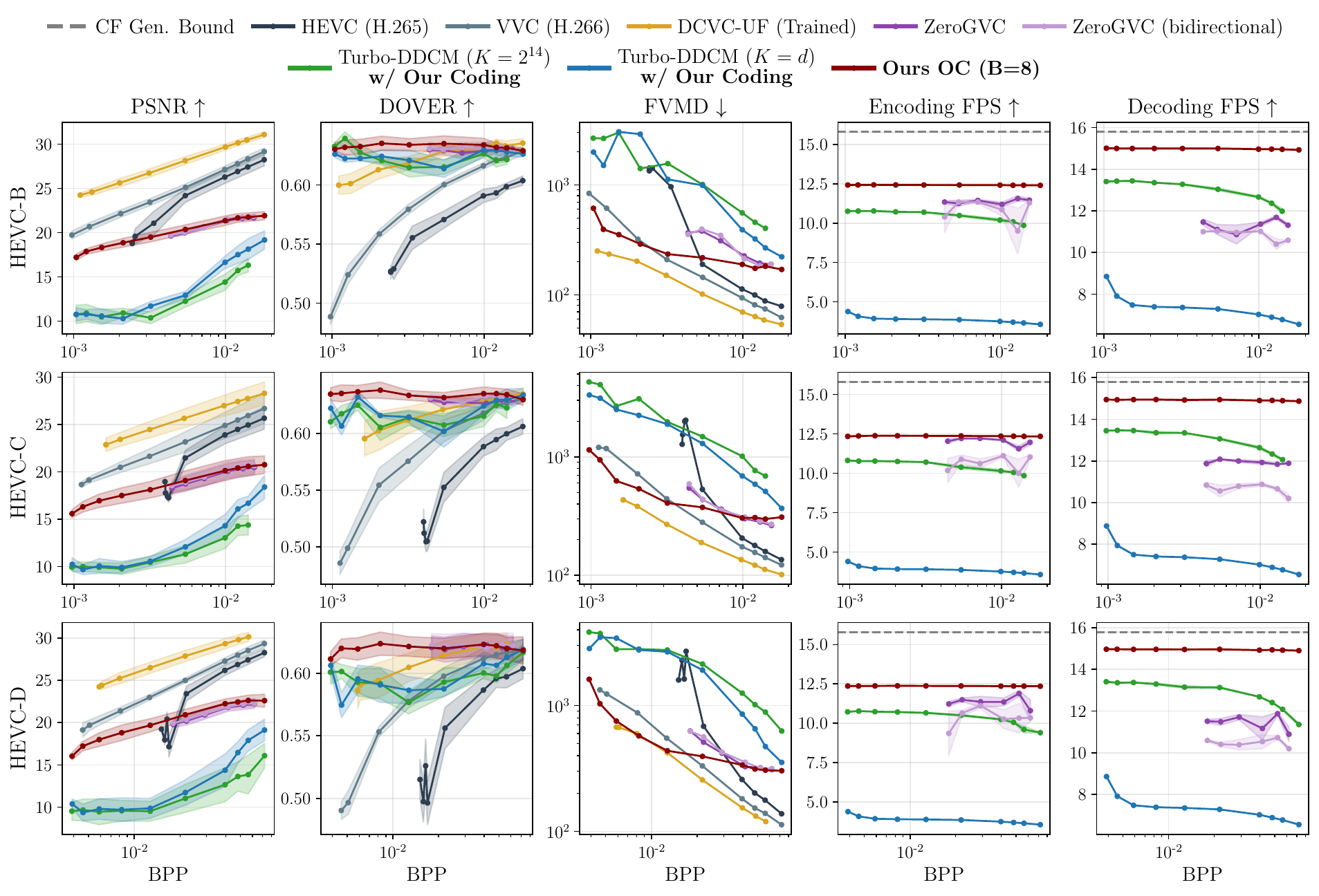}
    \end{subfigure}
    \vspace{-1.5em}
    \caption{\textbf{Quantitative comparison of video compression methods.} We compare \oursshort{} OC ($B=8$) with training-free methods using the same CF backbone, and other compression baselines in terms of rate, distortion and perception. Distortion is measured by PSNR, and perceptual quality is measured by DOVER and FVMD (presented in log-scale). Encoding and decoding FPS report compression and decompression throughput, respectively, and are compared among methods using the CF backbone for fair comparison. The first row corresponds to HEVC-B dataset, the second row corresponds to HEVC-C dataset, and the last row corresponds to HEVC-D dataset. \oursshort{} achieves competitive rate-distortion-perception performance among the CF-based training-free methods while providing the highest measured encoding and decoding throughput, without codebooks materialization. Error bars represent the standard error of the mean (SEM).}
    \label{fig:supp_video_quantitative_comparisons}
\end{figure}

Figure~\ref{fig:supp_video_quantitative_comparisons_complementary} presents complementary metrics of Figs.~\ref{fig:main_video_quantitative_comparisons} and~\ref{fig:supp_video_quantitative_comparisons}. Specifically, it presents the MS-SSIM~\citep{wang2003multiscale}, LPIPS~\citep{zhang2018unreasonable} and DISTS~\citep{ding2020image} distortion metrics, and perceptual quality metrics of FID~\citep{heusel2017gans} and FVD~\citep{unterthiner2018towards}.

\begin{figure}[htbp]
    \centering
    \begin{subfigure}{\linewidth}
        \centering
        \includegraphics[width=\linewidth]{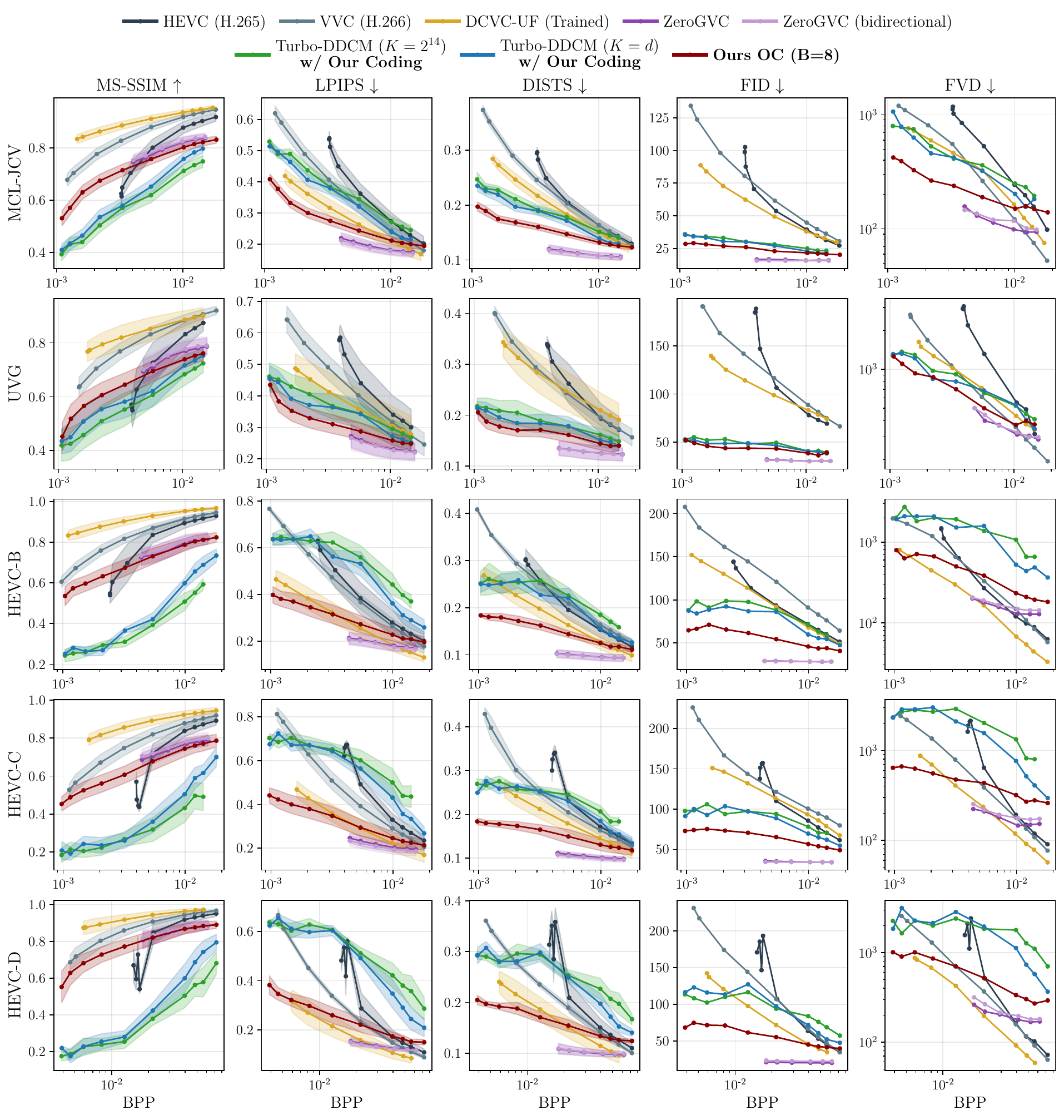}
    \end{subfigure}
    \vspace{-2em}
    \caption{\textbf{Complementary quantitative comparison of Figs.~\ref{fig:main_video_quantitative_comparisons} and~\ref{fig:supp_video_quantitative_comparisons} of video compression methods.} We compare \oursshort{} OC ($B=8$) with training-free methods using the same CF backbone and other compression baselines in terms of rate, distortion and perception. Distortion is measured by MS-SSIM, LPIPS and DISTS, perceptual quality is measured by FID and FVD (presented in log-scale). The first, second and third rows correspond to HEVC-B, HEVC-C and HEVC-D datasets, respectively. \oursshort{} achieves competitive rate-distortion-perception performance among the CF-based training-free methods. Error bars represent the standard error of the mean (SEM).}
    \label{fig:supp_video_quantitative_comparisons_complementary}
\end{figure}

\FloatBarrier
\clearpage

\subsection{Video compression qualitative comparison}
Figures~\ref{fig:supp_video_qualitative_comparisons_1},~\ref{fig:supp_video_qualitative_comparisons_2},~\ref{fig:supp_video_qualitative_comparisons_3},~\ref{fig:supp_video_qualitative_comparisons_4} and~\ref{fig:supp_video_qualitative_comparisons_5} present additional qualitative comparisons using the same setting and baselines as presented in Sec.~\ref{sec:vid_streaming_app}. We include spatial frames together with spatiotemporal slices to show both reconstruction fidelity and temporal continuity.

\begin{figure}[htbp]
    \centering
    \begin{subfigure}{\linewidth}
        \centering
        \includegraphics[width=\linewidth]{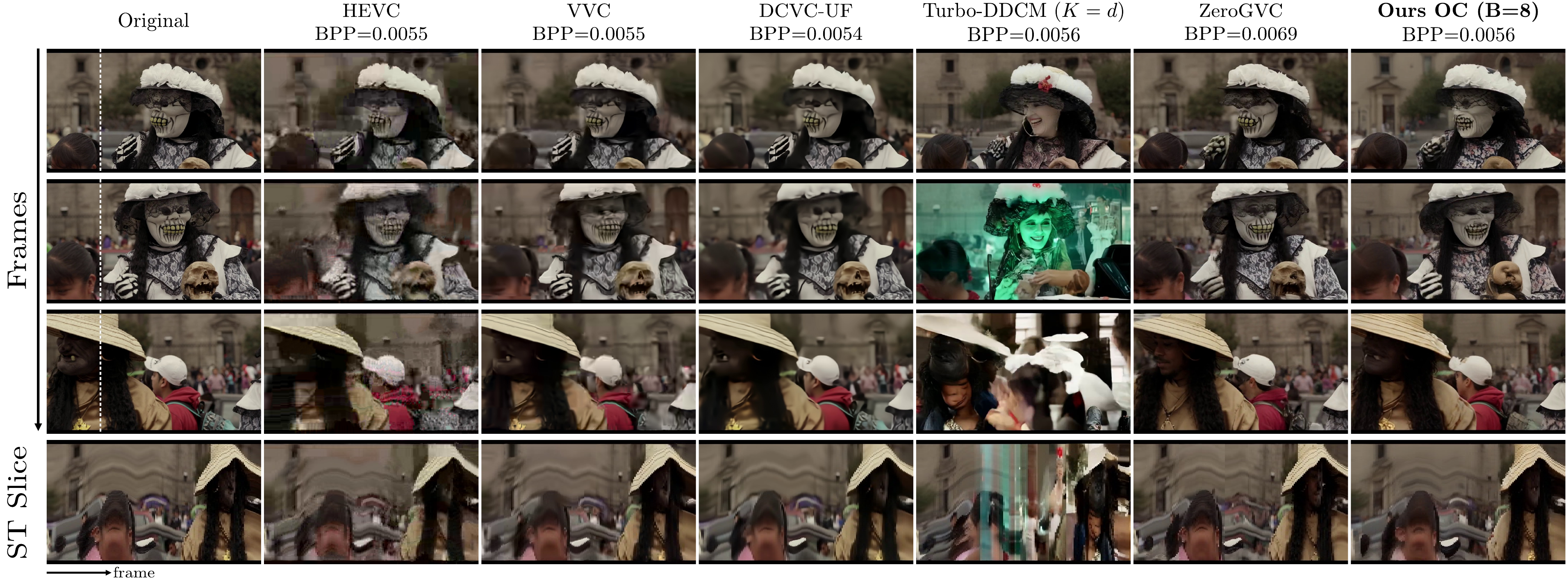}
    \end{subfigure}
    \vspace{-1.5em}
    \caption{\textbf{Qualitative comparison of video-compression methods.} The presented video is taken from MCL-JCV ($832\times480$) dataset. Our method produces comparable results on the rate-distortion-perception tradeoff, while being the fastest method utilizing the CF backbone. The space-time (ST) slice, which shows the temporal evolution of the pixels marked by the dashed line, reveals temporal discontinuities at ZeroGVC’s I-frame boundaries.}
    \label{fig:supp_video_qualitative_comparisons_1}
\end{figure}

\begin{figure}[htbp]
    \centering
    \begin{subfigure}{\linewidth}
        \centering
        \includegraphics[width=\linewidth]{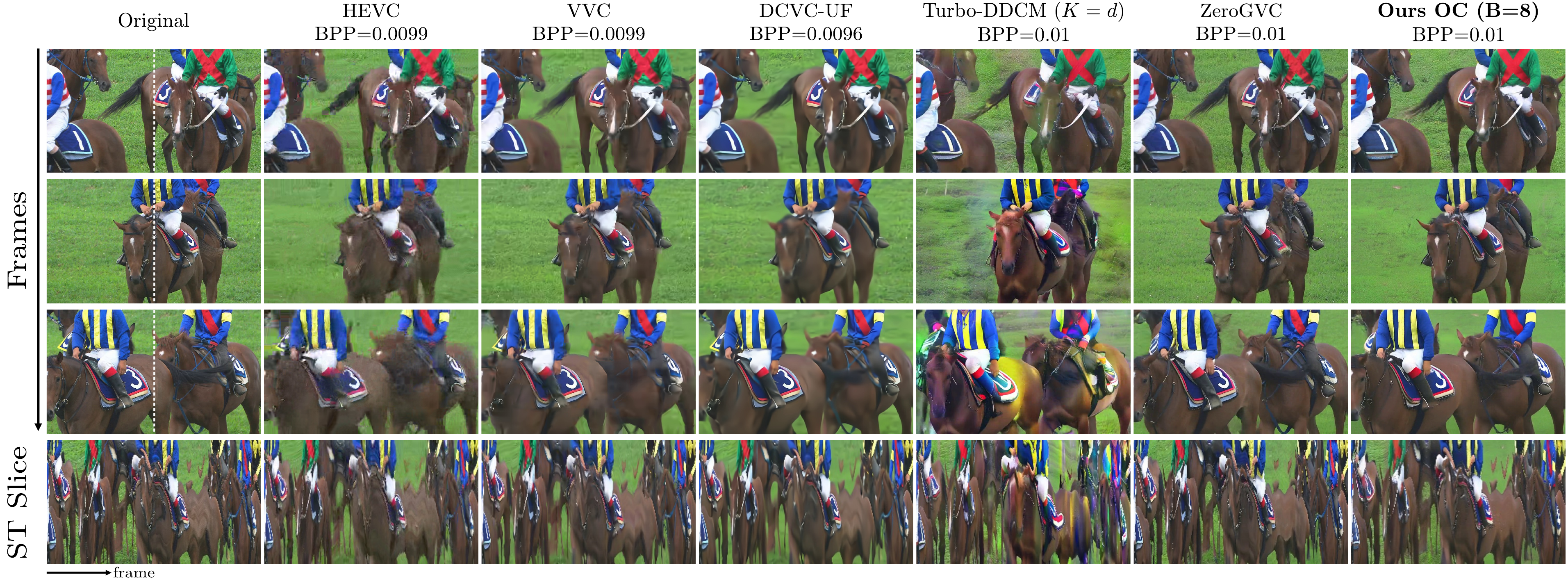}
    \end{subfigure}
    \vspace{-1.5em}
    \caption{\textbf{Qualitative comparison of video-compression methods.} The presented video is taken from HEVC-C ($832\times480$) dataset. Our method produces comparable results on the rate-distortion-perception tradeoff, while being the fastest method utilizing the CF backbone. The space-time (ST) slice, which shows the temporal evolution of the pixels marked by the dashed line, reveals temporal discontinuities at ZeroGVC’s I-frame boundaries.}
    \label{fig:supp_video_qualitative_comparisons_2}
\end{figure}

\begin{figure}[htbp]
    \centering
    \begin{subfigure}{\linewidth}
        \centering
        \includegraphics[width=\linewidth]{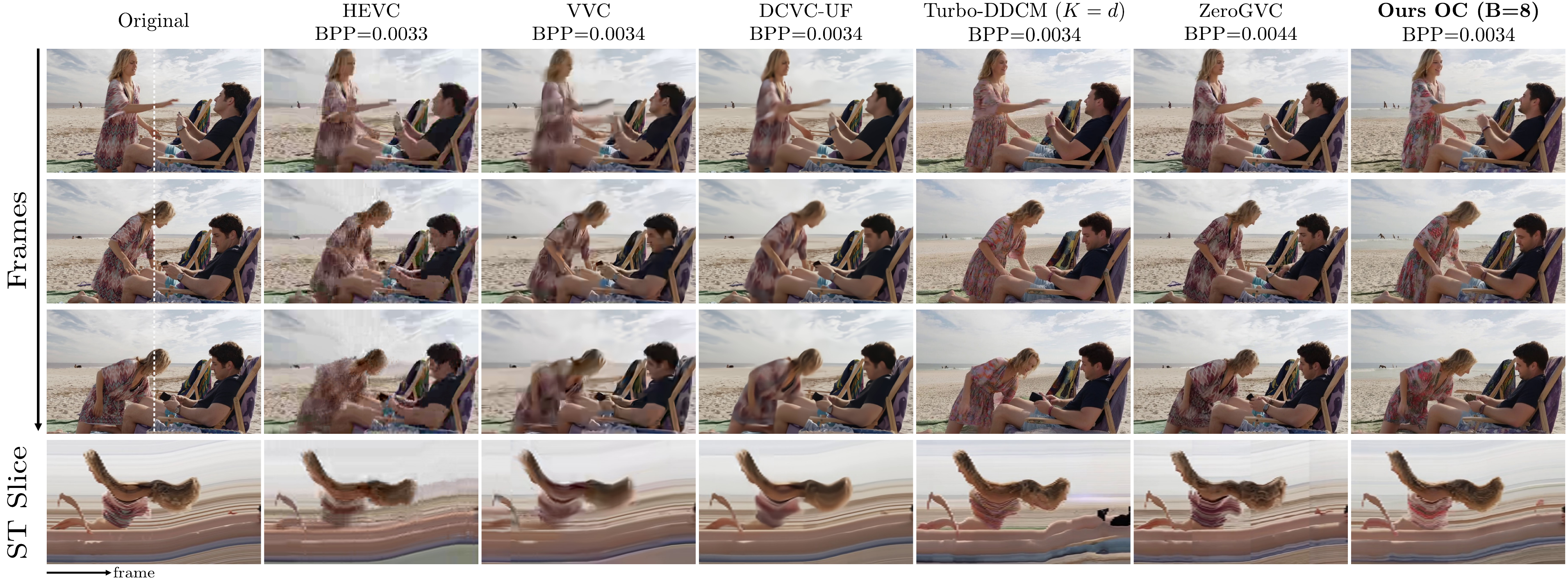}
    \end{subfigure}
    \vspace{-1.5em}
    \caption{\textbf{Qualitative comparison of video-compression methods.} The presented video is taken from MCL-JCV ($832\times480$) dataset. Our method produces comparable results on the rate-distortion-perception tradeoff, while being the fastest method utilizing the CF backbone. The space-time (ST) slice, which shows the temporal evolution of the pixels marked by the dashed line, reveals temporal discontinuities at ZeroGVC’s I-frame boundaries.}
    \label{fig:supp_video_qualitative_comparisons_3}
\end{figure}

\begin{figure}[htbp]
    \centering
    \begin{subfigure}{\linewidth}
        \centering
        \includegraphics[width=\linewidth]{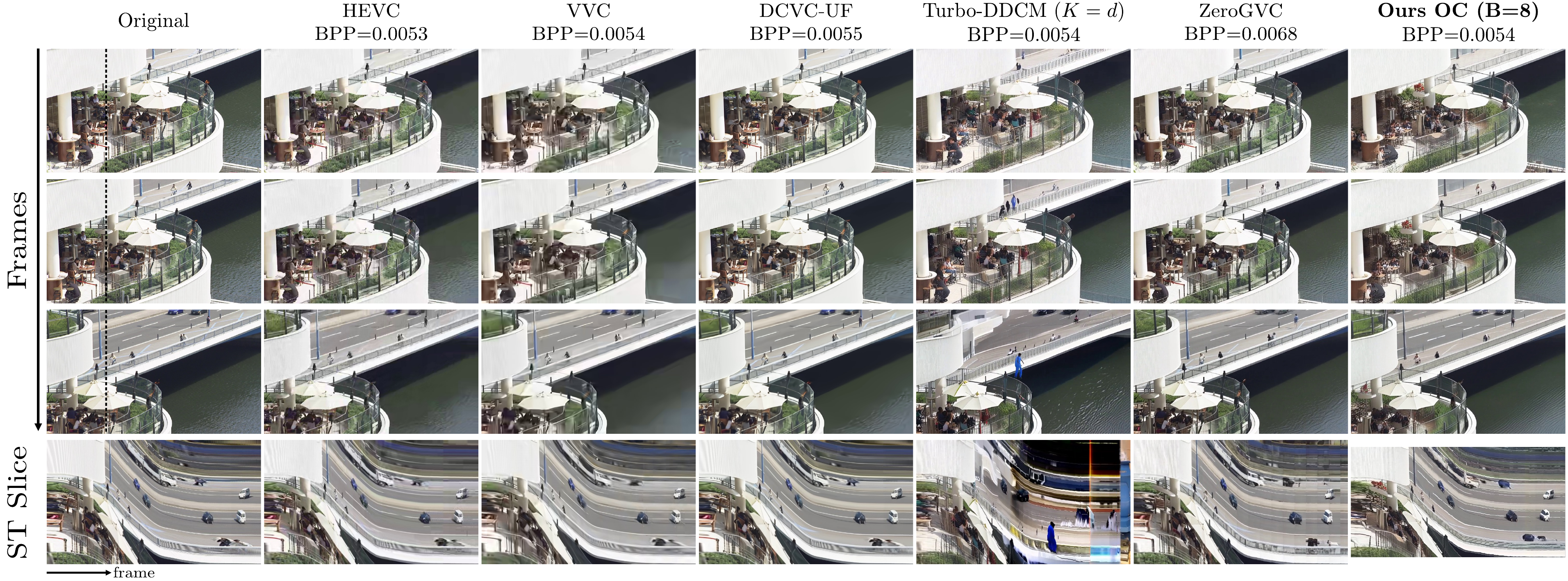}
    \end{subfigure}
    \vspace{-1.5em}
    \caption{\textbf{Qualitative comparison of video-compression methods.} The presented video is taken from HEVC-B ($832\times480$) dataset. Our method produces comparable results on the rate-distortion-perception tradeoff, while being the fastest method utilizing the CF backbone. The space-time (ST) slice, which shows the temporal evolution of the pixels marked by the dashed line, reveals temporal discontinuities at ZeroGVC’s I-frame boundaries.}
    \label{fig:supp_video_qualitative_comparisons_4}
\end{figure}

\begin{figure}[htbp]
    \centering
    \begin{subfigure}{\linewidth}
        \centering
        \includegraphics[width=\linewidth]{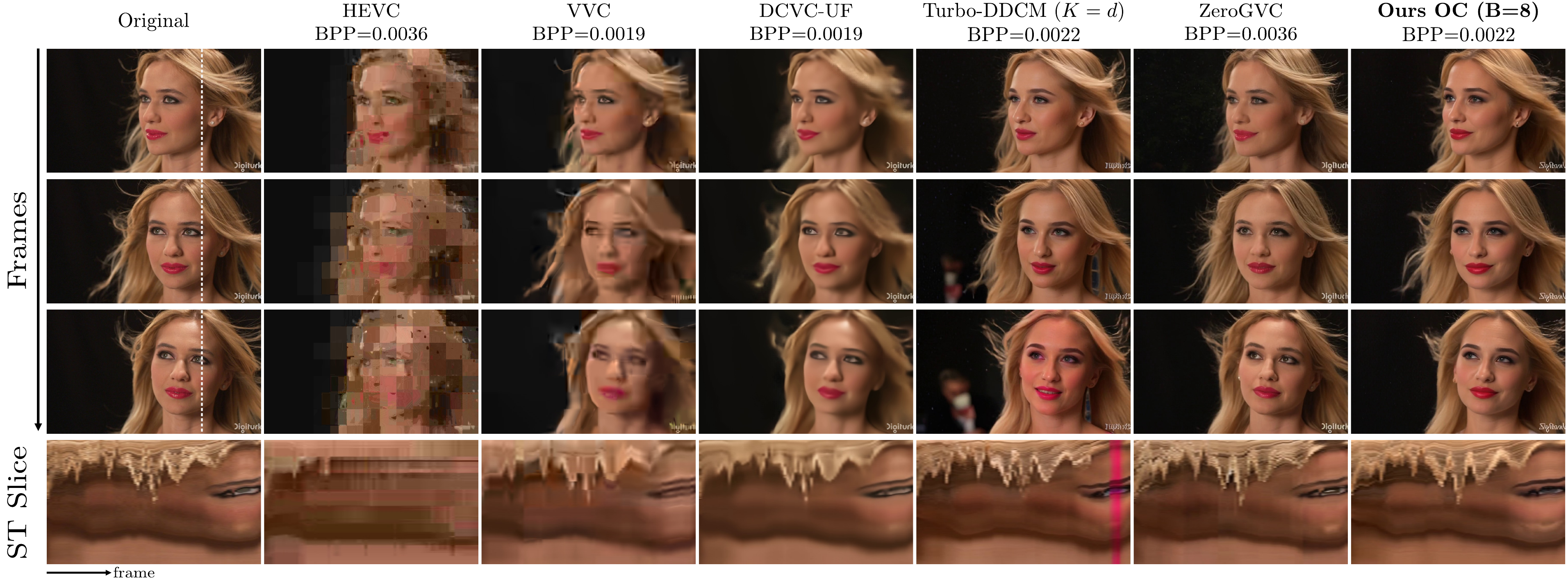}
    \end{subfigure}
    \vspace{-1.5em}
    \caption{\textbf{Qualitative comparison of video-compression methods.} The presented video is taken from UVG ($832\times480$) dataset. Our method produces comparable results on the rate-distortion-perception tradeoff, while being the fastest method utilizing the CF backbone. The space-time (ST) slice, which shows the temporal evolution of the pixels marked by the dashed line, reveals temporal discontinuities at ZeroGVC’s I-frame boundaries.}
    \label{fig:supp_video_qualitative_comparisons_5}
\end{figure}

\FloatBarrier
\clearpage

\subsection{Ablation of the overcomplete Hadamard dictionary}
We compare the single-basis \oursshort{} dictionary with the overcomplete (OC) variants using $B\in \{2,4,8\}$ randomized bases on the MCL-JCV dataset, following the same evaluation described in Sec.~\ref{sec:vid_streaming_app}. Figure~\ref{fig:supp_video_quantitative_ablation} presents the quantitative comparison, and shows that the OC variants improve both distortion and perceptual metrics across all evaluated bitrates, with $B=8$ providing the strongest results among the tested values.
Figure~\ref{fig:supp_video_qualitative_ablation} presents the qualitative comparison between the single-basis variant and $B=8$ variant at $\text{BPP}=0.01$. Both methods achieve compelling results for this low BPP value, however, the overcomplete dictionary variant is more consistent with the original video and exhibits higher visual realism

\begin{figure}[htbp]
    \centering
    \begin{subfigure}{\linewidth}
        \centering
        \includegraphics[width=\linewidth]{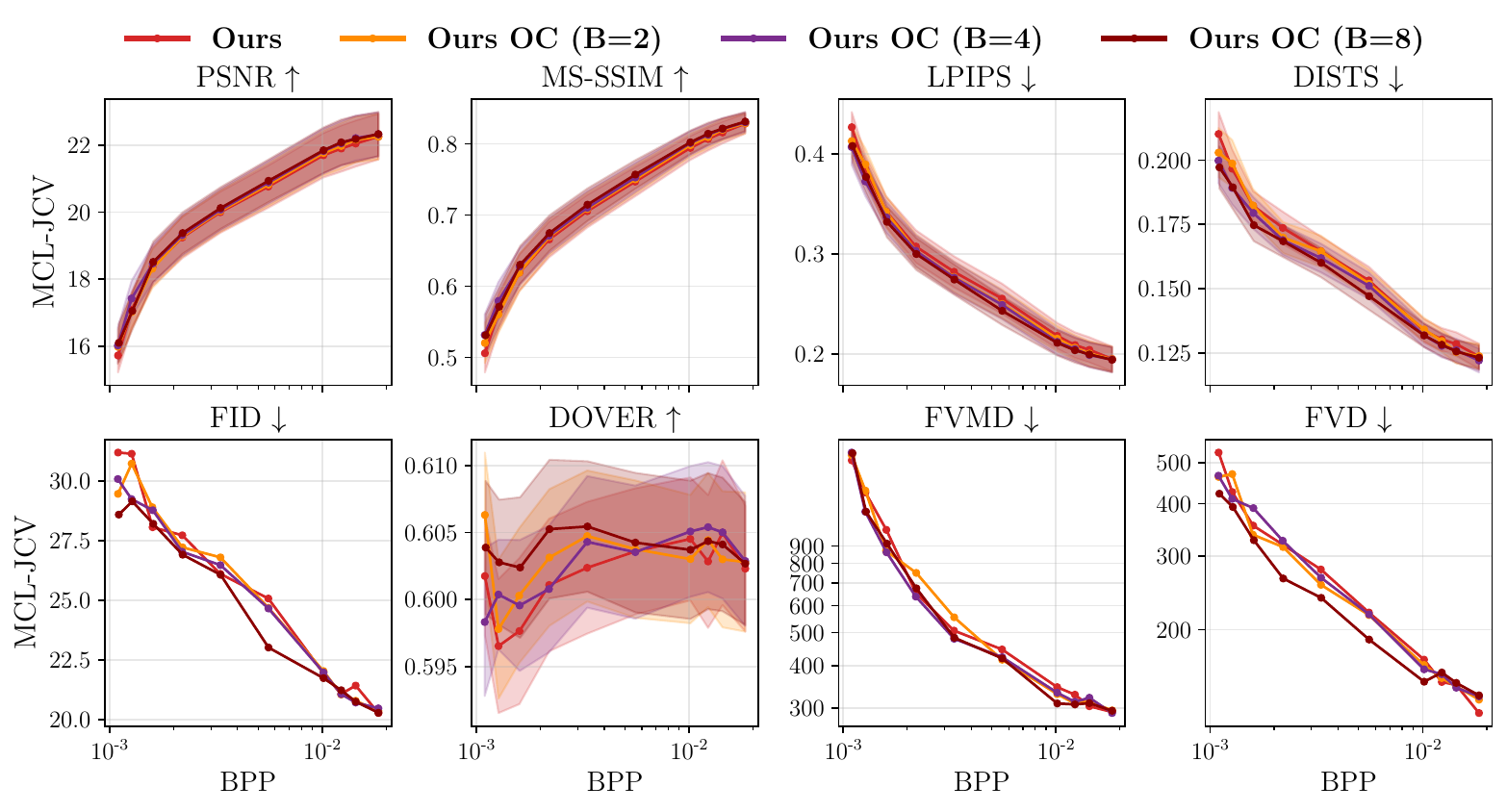}
    \end{subfigure}
    \vspace{-2em}
    \caption{\textbf{Quantitative comparison ablation between \oursshort{} and \oursshort{} OC ($B\in\{2,4,8\}$).} The overcomplete dictionary variant provides better performance across all distortion and perceptual quality metrics. Error bars represent standard error of the mean (SEM).}
    \label{fig:supp_video_quantitative_ablation}
\end{figure}

\begin{figure}[htbp]
    \centering
    \begin{subfigure}{\linewidth}
        \centering
        \includegraphics[width=\linewidth]{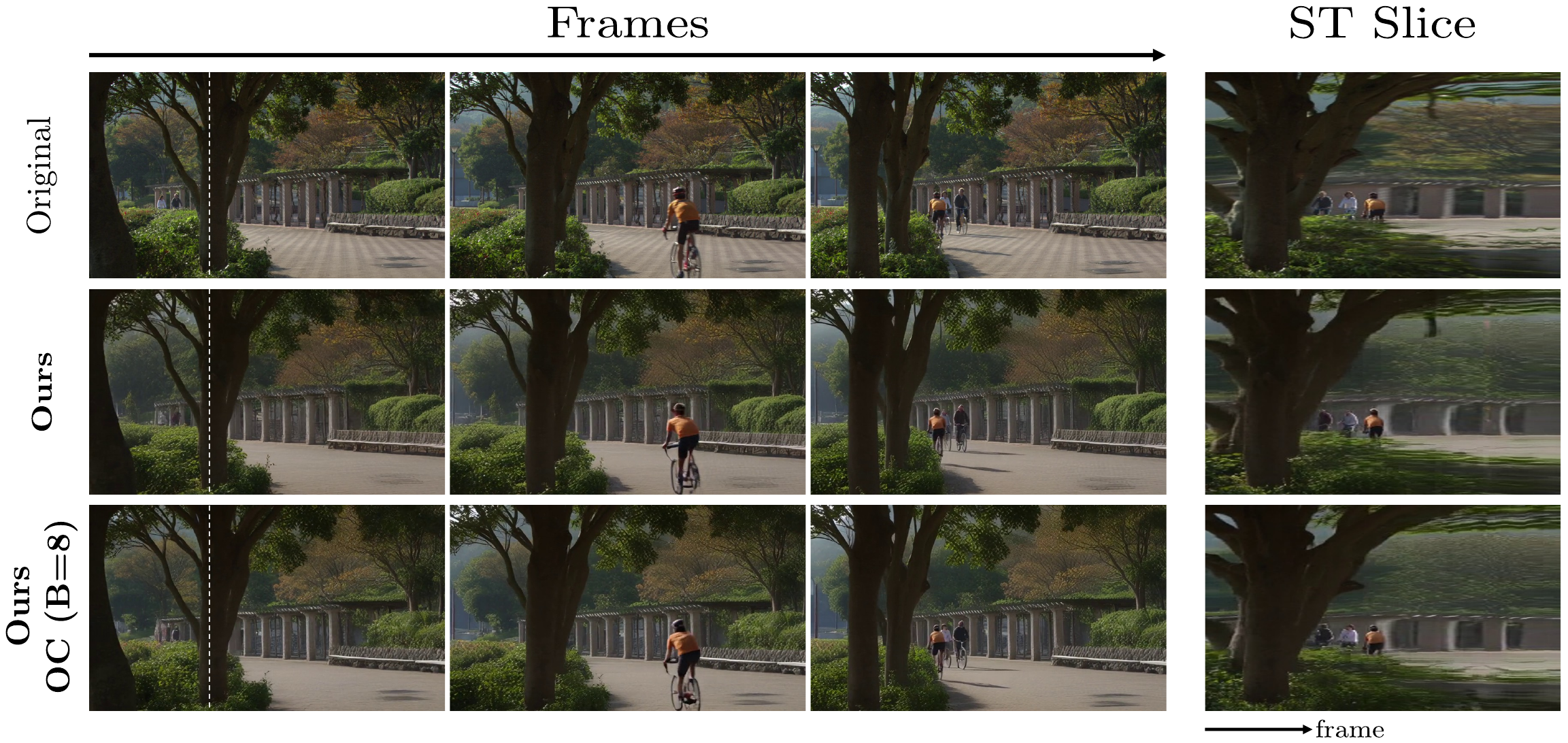}
    \end{subfigure}
    \vspace{-1.5em}
    \caption{\textbf{Qualitative comparison ablation between \oursshort{} and \oursshort{} OC ($B=8$).} The presented video is taken from HEVC-B ($832\times480$) dataset, for $\text{BPP}=0.01$. Both proposed methods achieve compelling results, however the perceptual quality and video consistency of the overcomplete dictionary is better.}
    \label{fig:supp_video_qualitative_ablation}
\end{figure}

\FloatBarrier
\clearpage

\subsection{Ablation of transpose and pseudo-inverse synthesis}
\label{app:trans_pinv}
When the dictionary is orthonormal, the columns of the selected sub-dictionary $\mathbf{C}_S \in \mathbb{R}^{d \times M}$ are mutually orthogonal ($\mathbf{C}_S^\top \mathbf{C}_S = \mathbf{I}_M$), so synthesizing via the pseudo-inverse $(\mathbf{C}_S^\dagger)^\top \boldsymbol{\alpha}_S = \mathbf{C}_S (\mathbf{C}_S^\top \mathbf{C}_S)^{-1} \boldsymbol{\alpha}_S$ collapses identically to the direct combination $\mathbf{C}_S \boldsymbol{\alpha}_S$ in~\eqref{eq:std-norm}. 
For the overcomplete dictionary, where selected atoms may originate from different bases are thus no longer mutually orthogonal. Therefore, we compare between synthesizing the noise direction via the pseudo-inverse $(\mathbf{C}_S^\dagger)^\top \boldsymbol{\alpha}_S$ and retaining the direct summation heuristic $\mathbf{C}_S \boldsymbol{\alpha}_S$ of~\citet{vaisman2026turbo}, on the UVG dataset. 
As shown in Fig.~\ref{fig:supp_video_quantitative_ablation_pinv}, both synthesis variants achieve nearly identical rate-distortion-perception performance across all evaluated BPP values, while the cosine similarity between their selected noise vectors remains highly aligned. We therefore use direct synthesis for \oursshort{} OC variant to avoid the additional solve.

\begin{figure}[htbp]
    \centering
    \begin{subfigure}{\linewidth}
        \centering
        \includegraphics[width=\linewidth]{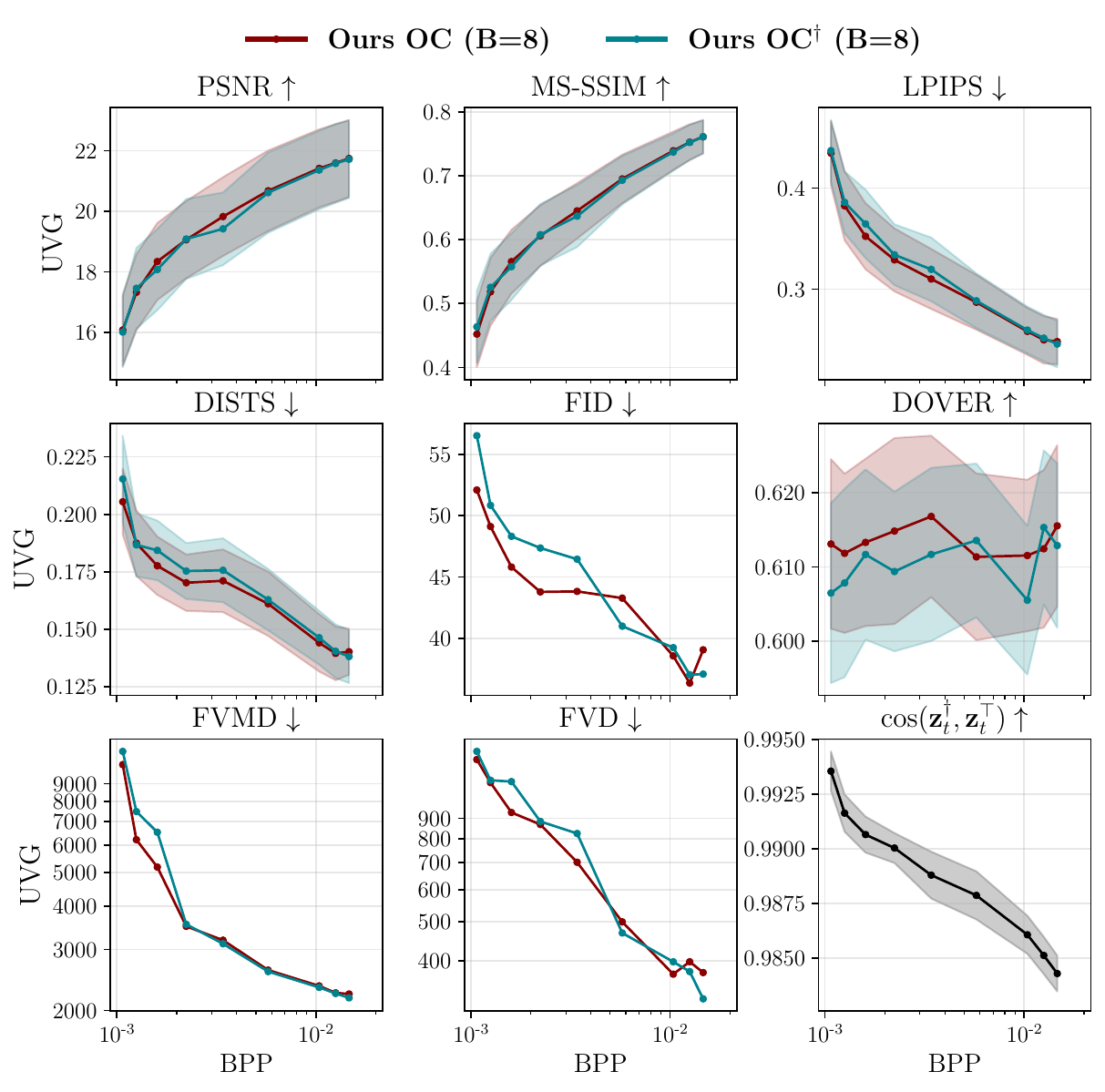}
    \end{subfigure}
    \vspace{-2em}
    \caption{\textbf{Quantitative comparison ablation between \oursshort{} OC utilizing the transpose variant, and \oursshort{} OC utilizing the pseudo-inverse, both with $B=8$.} Both variants achieve similar performance, and the cosine similarity between their selected noise vectors remains close to $1$, \ie the selected $\mathbf{z}_t^{\star}$ is almost aligned. Pseudo-inverse is denoted by $(\cdot)^{\dagger}$. Error bars represent standard error of the mean (SEM).}
    \label{fig:supp_video_quantitative_ablation_pinv}
\end{figure}

\FloatBarrier
\clearpage

\subsection{Ablation of cache refresh and first-step initialization}
We ablate two video-specific components of Sec.~\ref{sec:vid_streaming_app}, using the OC dictionary with $B=8$: the re-encoded-tail KV-cache refresh and the first-step initialization described in App.~\ref{app:sampling_formulations}. For cache handling, we compare our full method with ($i$) a hard reset at each continuation boundary (``w/o re-encoded-tail''), and ($ii$) a continuously growing cache without refresh (``w/ continuous cache'').

Figure~\ref{fig:supp_video_quantitative_ablation_mechanism} presents the quantitative comparison on the MCL-JCV dataset, following the same evaluation described in Sec.~\ref{sec:vid_streaming_app}.
The full refresh mechanism improves over both cache alternatives across the evaluated rates. The qualitative comparison in Fig.~\ref{fig:supp_video_qualitative_ablation_mechanism} shows the corresponding failure modes, for $\text{BPP}=0.01$: hard resets introduce discontinuities at segment boundaries, while the continuously growing cache produces severe artifacts once the context extends beyond the regime seen during training.

Additionally, we remove the first-step initialization mechanism (``w/o first step init.''). Although this variant can improve some aggregate quantitative metrics as seen in Fig.~\ref{fig:supp_video_quantitative_ablation_mechanism}, it struggles to consistently temporally align with the input video, as seen in Fig.~\ref{fig:supp_video_qualitative_ablation_mechanism}.

\begin{figure}[htbp]
    \centering
    \begin{subfigure}{\linewidth}
        \centering
        \includegraphics[width=\linewidth]{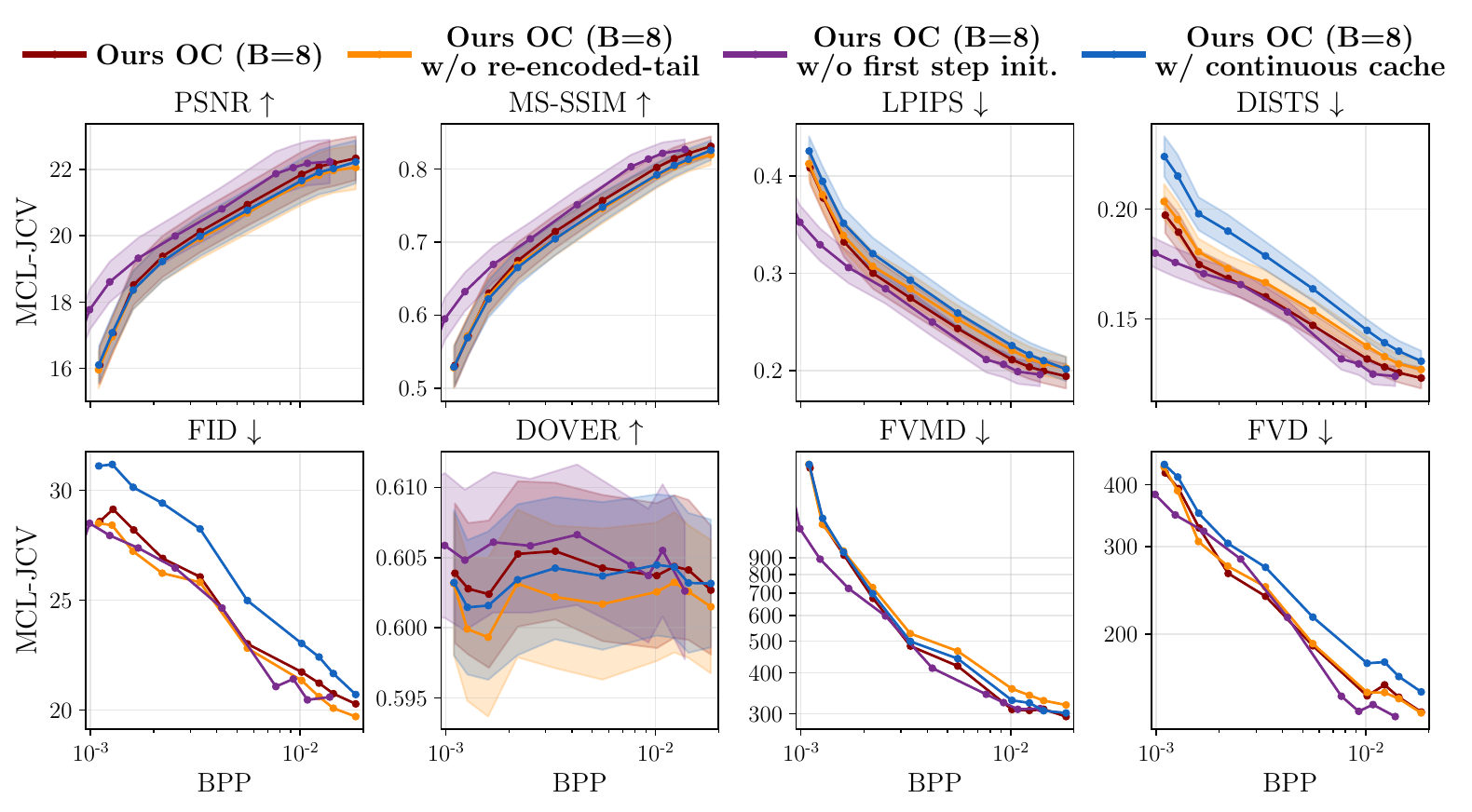}
    \end{subfigure}
    \vspace{-2em}
    \caption{\textbf{Quantitative comparison ablation between \oursshort{} OC ($B=8$), \oursshort{} OC ($B=8$) without the re-encoded-tail mechanism for both hard cache reset -- ``w/o re-encoded-tail'', and without cache reset -- ``w/ continuous cache'', and \oursshort{} OC ($B=8$) without the first timestep initialization mechanism.} \oursshort{} OC ($B=8$) achieves better with the re-encoded tail mechanism. Without the first timestep initialization mechanism provides better metrics, however as can be seen in Fig.~\ref{fig:supp_video_qualitative_ablation_mechanism}, it struggles to consistently align with the input video. Error bars represent standard error of the mean (SEM).}
    \label{fig:supp_video_quantitative_ablation_mechanism}
\end{figure}

\begin{figure}[htbp]
    \centering
    \begin{subfigure}{\linewidth}
        \centering
        \includegraphics[width=\linewidth]{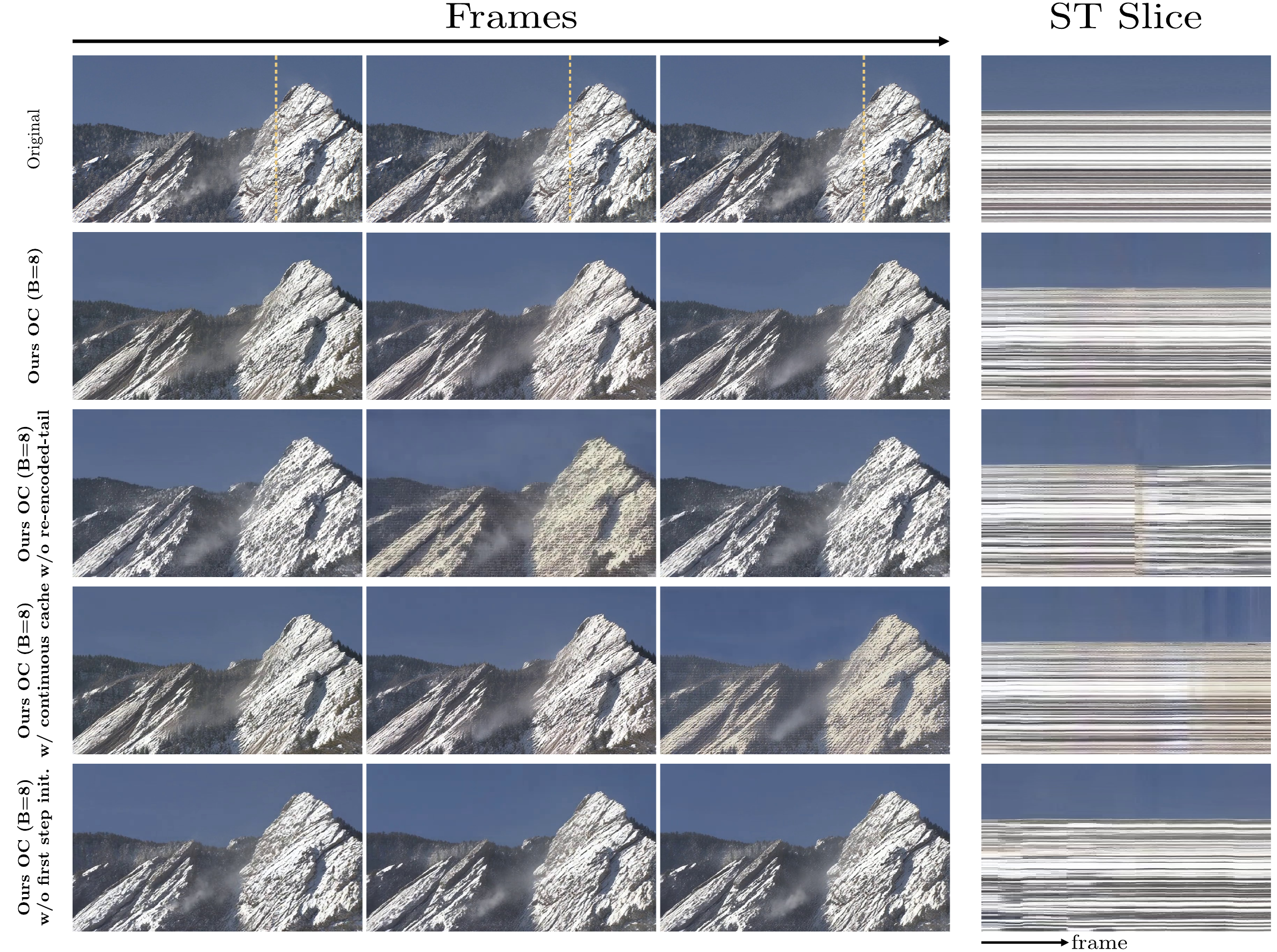}
    \end{subfigure}
    \vspace{-1.5em}
    \caption{\textbf{Qualitative comparison ablation between \oursshort{} OC ($B=8$), \oursshort{} OC ($B=8$) without the re-encoded-tail mechanism for both hard cache reset -- ``w/o re-encoded-tail'', and without cache reset -- ``w/ continuous cache'', and \oursshort{} OC ($B=8$) without the first timestep initialization mechanism.} The presented video is taken from MCL-JCV ($832\times480$) dataset, for $\text{BPP}=0.01$. While \oursshort{} OC ($B=8$) is aligned with the input video, without the re-encoded-tail mechanism when a new segment begins and a KV cache refresh occurs -- there is a temporal discontinuity which can be observed in the spatiotemporal slice, and artifacts which can be observed in the second presented frame. For the continuous cache variant, \ie without cache reset, there are severe artifacts caused as this size of a cache/context is out-of-distribution for the model, which could be observed in the third presented frame. Additionally, without the first timestep initialization the method struggles to consistently align with the input video.}
    \label{fig:supp_video_qualitative_ablation_mechanism}
\end{figure}

\FloatBarrier
\clearpage

%% file: supplementary/sampling_formulations.tex
\section{Sampling formulations}
\label{app:sampling_formulations}

Section~\ref{sec:bg:ddcm} writes the stochastic reverse transition in a sampler-agnostic form. Here we give the exact image and video updates used in our experiments.

\myparagraph{Image formulation: DDPM ancestral sampling.}
Following~\citet{vaisman2026turbo}, our image backbones (SD~2.1, FLUX) share the standard DDPM posterior mean~\citep{ho2020denoising}. Writing $\gamma_t \in (0,1)$ for the per-step signal-retention coefficient and $\bar\alpha_t = \prod_{i\le t}\gamma_i$ for its cumulative product; we use $\gamma_t$ rather than the more common $\alpha_t$, to distinguish the schedule from the sparse coefficient vector $\boldsymbol{\alpha}_t$. The mean and variance of the DDPM schedule are
\begin{equation}
\label{eq:mu_ddpm}
\boldsymbol{\mu}_t(\mathbf{x}_t) = \frac{\sqrt{\bar\alpha_{t-1}}\,(1-\gamma_t)}{1-\bar\alpha_t}\,\hat{\mathbf{x}}_{0|t} + \frac{\sqrt{\gamma_t}\,(1-\bar\alpha_{t-1})}{1-\bar\alpha_t}\,\mathbf{x}_t, \qquad \sigma_t = \sqrt{1-\gamma_t}.
\end{equation}
This is exactly the affine map of $\mathbf{x}_t$ referenced in Sec.~\ref{sec:bg:ddcm}: linear in $\mathbf{x}_t$ once the network's estimate $\hat{\mathbf{x}}_{0|t}$ is fixed by the forward pass at step $t$.

The last $N+1$ steps of the trajectory (Sec.~\ref{sec:method:rate}) are instead run deterministically, since they carry no bits and the decoder reproduces them unassisted. This replaces~\eqref{eq:mu_ddpm} with the $\sigma_t=0$ DDIM update~\citep{song2021denoising},
\begin{equation}
\label{eq:mu_ddim}
\boldsymbol{\mu}_t^{\mathrm{DDIM}}(\mathbf{x}_t) = \sqrt{\bar\alpha_{t-1}}\,\hat{\mathbf{x}}_{0|t} + \sqrt{1-\bar\alpha_{t-1}}\cdot\frac{\mathbf{x}_t - \sqrt{\bar\alpha_t}\,\hat{\mathbf{x}}_{0|t}}{\sqrt{1-\bar\alpha_t}},
\end{equation}
which still reads off the same network estimate $\hat{\mathbf{x}}_{0|t}$, only recombined with $\mathbf{x}_t$ along the noiseless direction consistent with it.

\myparagraph{Video formulation: consistency sampling.}
The causal video backbone distills a bidirectional teacher into a few-step AR student~\citep{zhu2026causal,huang2026self} parameterized as a consistency model~\citep{song2023consistency}. Specifically, a single forward evaluation $f_\theta(\mathbf{x}_t, t)$ maps a noisy chunk at any noise level $t$ directly to the origin of its probability flow trajectory, satisfying $f_\theta(\mathbf{x}_t, t) \approx f_\theta(\mathbf{x}_{t'}, t') \approx \hat{\mathbf{x}}_0$ for all $t \ne t'$, subject to the boundary condition $f_\theta(\mathbf{x}_\epsilon, \epsilon) = \mathbf{x}_\epsilon$ at the terminal noise scale $\epsilon$. In practice, the distilled student yields an empirical approximation of this mapping, which we denote as $\hat{\mathbf{x}}_{0|t} := f_\theta(\mathbf{x}_t, t)$ to maintain consistent notation. Unlike the DDPM transition in~\eqref{eq:mu_ddpm}, which interpolates between the current state $\mathbf{x}_t$ and the estimate $\hat{\mathbf{x}}_{0|t}$, consistency sampling constructs the transition drift toward the next discretized step $\tau_{n+1}$ by directly rescaling the predicted clean chunk:
\begin{equation}
\label{eq:mu_consistency}
\boldsymbol{\mu}_{\tau_n}(\mathbf{x}_{\tau_{n}}) := (1 - \tau_{n+1})\, f_\theta(\mathbf{x}_{\tau_{n}}, \tau_{n}) = (1 - \tau_{n+1})\, \hat{\mathbf{x}}_{0|\tau_{n}}.
\end{equation}
The residual noise vector $\mathbf{z}_{\tau_n}^\star$ is selected against $\hat{\mathbf{x}}_{0|\tau_{n}}$ and injected at the target schedule level $\tau_{n+1}$ scaled by $\sigma_{\tau_n} := \tau_{n+1}$:
\begin{equation}
\label{eq:mu_consistency_reinject}
\mathbf{x}_{\tau_{n+1}} = \boldsymbol{\mu}_{\tau_n}(\mathbf{x}_{\tau_{n}}) + \sigma_{\tau_n}\,\mathbf{z}_{\tau_n}^\star.
\end{equation}

\myparagraph{Initializing the first step.}
Both formulations select noise against the residual $\mathbf{r}_t := \mathbf{x}_{\mathrm{src}} - \hat{\mathbf{x}}_{0|t}$ (Sec.~\ref{sec:bg:turbo}), which needs $\hat{\mathbf{x}}_{0|t}$ from the network. Our video backbone's consistency schedule is just $n \in \{1,\ldots,4\}$ steps end to end, and $\hat{\mathbf{x}}_{0|\tau_{n}}$ in~\eqref{eq:mu_consistency_reinject} is only defined once a previous step has run; at $n=0$, \ie before the model evaluation, there is no previous step, so no network prediction exists yet to form $\mathbf{r}_t$ against. \citet{ohayon2025compressed} and~\citet{vaisman2026turbo} suggest using a shared seed to generate random Gaussian noise for the initial noise $\mathbf{x}_T$. We instead use an approximation of the posterior mean: the source chunk's own spatial mean, $\boldsymbol{\mu}_{\mathrm{src}} := \mathrm{mean}(\mathbf{x}_{\mathrm{src}})\mathbf{1}$, giving a proxy residual of $\mathbf{x}_{\mathrm{src}} - \boldsymbol{\mu}_{\mathrm{src}}$ for that one step only; every subsequent step reverts to the network's own $\hat{\mathbf{x}}_{0|t}$. We ablate this initialization in App.~\ref{app:additional_results}.

\myparagraph{Bit-carrying steps.}
Section~\ref{sec:method:rate} writes the number of bit-carrying steps as $S = T-N-1$ for images and $S = T$ for video (Eq.~\eqref{eq:bpp}). The $N+1$ accounts for the tail described above: the image formulation spends its last $N+1$ steps on the deterministic, bit-free $\boldsymbol{\mu}_t^{\mathrm{DDIM}}$ update~\eqref{eq:mu_ddim} rather than the bit-carrying $\boldsymbol{\mu}_t$ of~\eqref{eq:mu_ddpm} -- including the final transition $t=1\to0$, where $\bar\alpha_0=1$ collapses~\eqref{eq:mu_ddim} to the network's bare estimate $\hat{\mathbf{x}}_{0|1}$, carrying no bits regardless of $N$. The video formulation has no deterministic tail at all: with only four consistency steps to begin with, every one of them -- including the first, bootstrapped from $\boldsymbol{\mu}_{\mathrm{src}}$ above -- selects and transmits noise.

\clearpage

%% file: supplementary/bitstream_protocol.tex
\section{Bitstream protocol}
\label{app:bitstream_protocol}

At each bit-carrying step, the encoder transmits an unordered support of $M$ selected atoms together with their quantized coefficients. Listing the indices independently would require $M\lceil\log_2K\rceil$ bits. This approach is wasteful as it redundantly allocates different codewords all $M!$ permutations of the same support. We instead encode the sorted support by its rank among the $\binom{K}{M}$ possible $M$-subsets, requiring $\lceil\log_2\binom{K}{M}\rceil$ support bits plus $MC$ bits for the corresponding quantized coefficients in $\mathcal{V}$.

\myparagraph{Lexicographical combination encoding and decoding.}
Let $c = [c_1, \dots, c_M]$, $c_1 < \dots < c_M$, be the sorted combination of selected indices drawn from $\{0, \dots, K-1\}$. Its lexicographic rank is
\begin{equation}
\mathrm{rank}(c) \;=\; \sum_{i=1}^{M} \sum_{x = c_{i-1}+1}^{c_i - 1} \binom{K - x - 1}{M - i}, \qquad c_0 := -1,
\label{eq:lex_rank}
\end{equation}
\ie for every index $x$ skipped before reaching $c_i$, we add the number of combinations that would have been reachable had $x$ been chosen instead. This gives a bijection between combinations and integers in $[0, \binom{K}{M})$, so $\mathrm{rank}(c)$ alone determines $c$ uniquely.

Decoding inverts this by the same accounting in reverse: starting from $x=0$, we repeatedly test whether the next binomial term still fits inside the remaining rank,
\begin{equation}
\text{while } \binom{K-x-1}{M-i} \le \mathrm{rank}:\quad
\mathrm{rank} \leftarrow \mathrm{rank} - \binom{K-x-1}{M-i},\;\; x
\leftarrow x+1,
\label{eq:lex_decode}
\end{equation}
and take the first $x$ for which the term exceeds the remaining rank as $c_i$, before moving on to $c_{i+1}$. This recovers every element of $c$ in increasing order and is the exact inverse of~\eqref{eq:lex_rank}.

\myparagraph{Group-factorized encoding.} Rather than rank the full support of size $K$ in one piece, we partition it into $G$ equal groups of $K_G := K/G$ elements each and factorize the bitstream by group: writing $m_g$ for the number of atoms selected from group $g$, so $\sum_{g=1}^{G}m_g=M$, we first encode the count vector $(m_1,\ldots,m_G)$ itself, then rank each group's own local support of size $K_G$ separately with the machinery below.

A stars-and-bars map turns the count vector into $G-1$ strictly increasing separators,
\begin{equation}
s_g = \sum_{q=1}^{g}m_q + (g-1), \qquad g=1,\ldots,G-1,
\label{eq:oc-separators}
\end{equation}
which is exactly a $(G-1)$-subset of an $(M+G-1)$-element universe and is ranked by the same lexicographic construction as below, at a cost of
\begin{equation}
L_{\mathrm{counts}} = \left\lceil\log_2\binom{M+G-1}{G-1}\right\rceil
\label{eq:oc-counts-length}
\end{equation}
bits. Every nonempty group then contributes its own rank of an $m_g$-subset of $\{0,\ldots,K_G-1\}$, plus $m_g C$ coefficient bits, for a total slot length of
\begin{equation}
L_{\mathrm{grouped}} = L_{\mathrm{counts}} + \sum_{g=1}^{G}\left(\left\lceil\log_2\binom{K_G}{m_g}\right\rceil + m_g C\right);
\label{eq:oc-factorized-length}
\end{equation}
empty groups contribute neither term. Decoding reverses the order: unrank the separators to recover every $m_g$, which fixes the exact length of every local payload that follows, then unrank each nonempty group's own support and coefficients and re-form the flat index $\ell=gK_G+j$. No group identity is transmitted on its own -- it is implicit in which local rank a given index came from.

Setting $G=1$ recovers a single flat rank over all $K$ coordinates ($K_G=K$, $L_{\mathrm{counts}}=0$). The overcomplete dictionary of Sec.~\ref{sec:method:overcomplete} sits at the other end, $G=B$, with each of the $B$ randomized bases forming one group of $K_G=d$ atoms (total support $K=Bd$); a flat index over the full $Bd$-atom union would push the arithmetic below to a combinadic space $B$ times larger than any other method here ever ranks (App.~\ref{app:overcomplete_hadamard} gives the geometry in full).

\myparagraph{Implementation details.}
A direct evaluation of~\eqref{eq:lex_rank} and~\eqref{eq:lex_decode} requires up to $K$ binomial-coefficient evaluations per encoded/decoded combination. A natural way to avoid recomputing each one from scratch is to precompute and cache the full table of values $\binom{K-x-1}{M-i}$ for every $(x,i)$ once, ahead of time, and simply look up the relevant entries at runtime~\citep{vaisman2026turbo}. This is effective as long as $K$ and $M$ remain modest, but the table itself has size $O(K \cdot M)$: at the scale required for high-resolution latent spaces -- \eg $K$ on the order of $2^{18}$, with $M$ occasionally exceeding $10^3$ -- such a table would need on the order of $10^{8}$ entries, making it impractical to build, store, or keep resident in memory.

We instead avoid materializing any table with an incremental recurrence: consecutive binomial terms in~\eqref{eq:lex_rank} and~\eqref{eq:lex_decode}, writing $n,k$ for $(K-x-1, M-i)$, are related by two exact, $O(1)$ identities (Pascal's rule, applied in each of its two directions):
\begin{align}
\binom{n-1}{k-1} &= \binom{n}{k}\cdot \frac{k}{n}, &
\binom{n-1}{k} &= \binom{n}{k}\cdot \frac{n-k}{n}.
\label{eq:pascal_recurrence}
\end{align}
Starting from a single initial value, we advance a running binomial coefficient through the same $(n,k)$ lattice that~\eqref{eq:lex_rank} and~\eqref{eq:lex_decode} would otherwise re-derive from scratch at every term, using whichever of the two recurrences in~\eqref{eq:pascal_recurrence} matches the step being taken. This yields exactly the same rank (and its exact inverse) using $O(K)$ total $O(1)$ updates and $O(1)$ auxiliary memory, with no dependence on a precomputed table at all. Because the intermediate binomial coefficients can still grow to several hundred bits at the $K$, $M$ values we consider, the running value is maintained with an arbitrary-precision integer library rather than fixed-width integers. We verified this procedure produces bit-identical encodings and decodings to the direct formula at every $(K,M)$ pair we tested, while remaining tractable at scales where a precomputed table is
not.

A second table-free option, the hockey-stick identity (see below for additional details), rewrites~\eqref{eq:lex_rank}'s inner sum in closed form: for each selected index $c_i$, the count skipped since $c_{i-1}$ becomes a single difference of two fresh binomial evaluations,
\begin{equation}
\binom{K-c_{i-1}-1}{M-i+1} - \binom{K-c_i}{M-i+1},
\label{eq:hockey_stick}
\end{equation}
summed over $i=1,\ldots,M$ to encode; decoding inverts~\eqref{eq:hockey_stick} by a binary search over $c_i$ at each of the $M$ steps. This costs $O(M)$ fresh binomial evaluations to encode and $O(M\log K)$ to decode, against the recurrence's flat $O(K)$ -- and since evaluating $\binom{n}{k}$ from scratch is not itself $O(1)$ in $k$, which mechanism wins depends on how large $M$ is next to $K$. Both run on the CPU in parallel with the diffusion model's own GPU forward pass, so whichever one is cheap for the $M$ regime it runs at stays negligible next to the model's own step cost. We use both in this paper: for images, whose $M$ sweeps well into the regime where hockey-stick is no longer competitive, we use the recurrence for both ranking and unranking. For video, \oursshort{} and both Turbo-DDCM variants alike, both ranking and unranking use the hockey-stick identity throughout at every $M$ we sweep.

For completeness, in the following, we summarize the hockey-stick identity and the derivation of~\eqref{eq:hockey_stick}. The hockey-stick identity is defined as
\begin{equation}
    \sum_{y=k}^n \binom{y}{k} = \binom{n+1}{k+1}.
\label{hs_identity}
\end{equation}
In~\eqref{eq:lex_rank} we have $\sum_{x=c_{i-1}+1}^{c_i-1} \binom{K-x-1}{M-i}$, where the top index varies, not the bottom. Writing $k = M-i$ and substituting $y = K-x-1$ we obtain
\begin{equation*}
    \sum_{y=K-c_i}^{K-c_{i-1} -2} \binom{y}{k} = \sum_{y=k}^{K-c_{i-1} - 2} \binom{y}{k} - \sum_{y=k}^{K-c_i - 1} \binom{y}{k} = \binom{K - c_{i-1} - 1}{M - i +1} - \binom{K - c_i}{M - i + 1},
\end{equation*}
where the last equality obtained by~\eqref{hs_identity}, which also completes the derivation.

\myparagraph{Measured comparison (with $G=1$).} At FLUX scale ($M=219$, one image) the table already costs $61.1$\,s to build at $K=2^{14}$, but $1386.8$\,s ($\approx 23$\,min) at $K=2^{18}=d$, occupying $0.32$ and $8.10$\,GiB on disk respectively -- and, since each entry's bit-length grows with both $K$ and $M$, the table size grows worse than linearly in $M$: extrapolating to the larger $M$ used elsewhere in our FLUX grid projects tables of hundreds of gigabytes to several terabytes, one of which would have to be rebuilt from scratch for every distinct $(K,M)$ pair. Our recurrence pays none of this, at a per-call cost that is comparable to or faster than a table lookup once the table already exists (Table~\ref{tab:bitstream_head_to_head}).

\begin{table}[ht]
\centering
\small
\begin{tabular}{llrrrr}
\toprule
$K$ & Mechanism & One-time setup (s) & Compress (s) & Decompress (s) & Total (s) \\
\midrule
$2^{14}$ & Table (\citealp{vaisman2026turbo}) & 61.1 & 19.05 & 29.75 & 48.80 \\
$2^{14}$ & Recurrence (ours) & 0 & 18.06 & 27.68 & 45.73 \\
$2^{18}=d$ & Table (\citealp{vaisman2026turbo}) & 1386.8 & 37.09 & 38.14 & 75.23 \\
$2^{18}=d$ & Recurrence (ours) & 0 & 33.81 & 34.69 & 68.50 \\
\bottomrule
\end{tabular}
\caption{\textbf{Turbo-DDCM round-trip bitstream cost at $M=219$ (FLUX, with $d=2^{18}$).} The per-call cost is close between the two mechanisms; the one-time table build is what the original protocol pays fresh for every $(K,M)$ it is run at, and which the recurrence never pays.}
\label{tab:bitstream_head_to_head}
\end{table}

Standalone rank/unrank timings make the crossover between the two table-free mechanisms explicit (Table~\ref{tab:hockey_stick_crossover}). Hockey-stick evaluation is substantially cheaper when $M$ is small because it touches only the selected indices, whereas the recurrence performs an $O(K)$ sweep. As $M$ grows, repeated arbitrary-precision binomial evaluations make hockey-stick decoding increasingly expensive, while the recurrence grows much more slowly. We therefore use hockey-stick coding for the low-$M$ video regime and the recurrence for the broader image sweeps.

\begin{table}[ht]
\centering
\small
\resizebox{\linewidth}{!}{%
\begin{tabular}{lrrrrrr}
\toprule
$M$ & Recurrence rank (s) & Recurrence unrank (s) & Hockey-Stick rank (s) & Hockey-Stick unrank (s) & Rank ratio & Unrank ratio \\
\midrule
43   & 0.034 & 0.049 & 0.0001 & 0.0008  & 0.004$\times$ & 0.02$\times$ \\
67   & 0.033 & 0.050 & 0.0003 & 0.0022  & 0.008$\times$ & 0.04$\times$ \\
84   & 0.039 & 0.055 & 0.0004 & 0.0035  & 0.01$\times$  & 0.06$\times$ \\
128  & 0.041 & 0.060 & 0.0010 & 0.0091  & 0.02$\times$  & 0.15$\times$ \\
219  & 0.051 & 0.071 & 0.0034 & 0.0320  & 0.07$\times$  & 0.45$\times$ \\
373  & 0.063 & 0.083 & 0.0117 & 0.1153  & 0.19$\times$  & 1.4$\times$  \\
638  & 0.088 & 0.107 & 0.0457 & 0.4502  & 0.52$\times$  & 4.2$\times$  \\
1063 & 0.120 & 0.152 & 0.1709 & 1.6447  & 1.4$\times$   & 10.8$\times$ \\
1559 & 0.161 & 0.190 & 0.4594 & 4.4245  & 2.9$\times$   & 23.3$\times$ \\
2697 & 0.247 & 0.285 & 1.9003 & 18.291  & 7.7$\times$   & 64.1$\times$ \\
4641 & 0.352 & 0.399 & 7.8415 & 75.405  & 22.3$\times$  & 189.2$\times$ \\
\bottomrule
\end{tabular}%
}
\caption{\textbf{Recurrence vs.\ hockey-stick, at $K=d=2^{18}$ standalone rank/unrank cost, without model calls.} Ratios are hockey-stick over recurrence.}
\label{tab:hockey_stick_crossover}
\end{table}

\clearpage

%% file: supplementary/codebook_cost.tex
\section{Cost of an explicit codebook}
\label{app:codebook_cost}

This appendix quantifies the dense-codebook bottleneck discussed in Sec.~\ref{sec:bg:turbo}. Computing the correlations in~\eqref{eq:thresholding} accesses all $Kd$ codebook entries, so at every bit-carrying denoising step $\boldsymbol{C}_t$ must either be stored in memory or regenerated in tiles. Reconstruction of $\mathbf{z}_t^\star$ accesses only the $M$ selected columns, but those columns originate from the same dense codebook.

In \texttt{fp16}, forming $\mathbf{u}_t=\boldsymbol{C}_t^\top\mathbf{r}_t$ requires approximately $2Kd$ FLOPs while reading $2Kd$ bytes. The operation is therefore strongly bandwidth-sensitive. More importantly, the codebook itself occupies $2Kd$ bytes: approximately $0.5$\,GiB for SD~2.1 with $K=d=2^{14}$, $128$\,GiB for FLUX with $K=d=2^{18}$, and $167$\,GiB for a CF video chunk with $K=d=299{,}520$. At the larger scales, the complete codebook cannot reside alongside the backbone and must instead be regenerated and processed in blocks, increasing runtime.

Capping $K$ reduces this cost but restricts the span of the available atoms. For $K<d$, components of the residual outside the codebook span cannot be represented, regardless of the number of selected atoms. Consistent with this limitation, our FLUX experiments show lower quality for the capped $K=2^{14}$ Turbo-DDCM codebook than for the complete $K=d$ codebook at matched bitrate (Sec.~\ref{sec:experiments}). Thus, explicit Gaussian codebooks impose a memory-fidelity tradeoff that becomes increasingly restrictive as latent dimension grows.

%% file: supplementary/overcomplete_hadamard.tex
\section{Overcomplete Hadamard dictionary}
\label{app:overcomplete_hadamard}

This appendix provides the full construction and rate-matching details for the overcomplete extension of Sec.~\ref{sec:method:overcomplete}, evaluated with $B=8$ bases in the streaming-video experiments of Sec.~\ref{sec:vid_streaming_app}. In this section only, for convenience, we denote the padded Hadamard support to the next power-of-two dimension as $\tilde{d}$.

\myparagraph{Padded and cropped geometry.}
A CF latent chunk has true flattened dimension $d=299{,}520$, whereas the FWHT operates at the next supported power-of-two dimension $\tilde{d}=2^{19}$. Let $\mathbf P\in\{0,1\}^{d\times \tilde{d}}$ denote cropping to the true latent coordinates, so $\mathbf P^\top$ performs zero-padding and $\mathbf P\mathbf P^\top=\mathbf I_d$. In the padded domain, each $\mathbf D_{t,b}\mathbf H\in\mathbb R^{\tilde{d}\times \tilde{d}}$ is exactly orthogonal. The corresponding effective dictionary after cropping is
\begin{equation}
\label{eq:oc-dictionary}
\mathbf{A}_{t,b} = \mathbf{P}\mathbf{D}_{t,b}\mathbf{H} \in \mathbb{R}^{d\times \tilde{d}}, \qquad \mathbf{u}_{t,b} = \mathbf{A}_{t,b}^\top\mathbf{r}_t = \mathbf{H}\mathbf{D}_{t,b}\mathbf{P}^\top\mathbf{r}_t,
\end{equation}
for $b=1,\dots,B$, with $\mathbf{D}_{t,b}=\diag(\boldsymbol{\varepsilon}_{t,b})$ drawn from the shared seed exactly as in Sec.~\ref{sec:method:selection}. The cropped dictionary has orthonormal rows, $\mathbf A_{t,b}\mathbf A_{t,b}^\top=\mathbf I_d$, but not orthonormal columns when $\tilde{d}>d$. Thus, the exact single-basis thresholding statement in Sec.~\ref{sec:method:selection} applies to the padded-domain objective; after cropping, the effective dictionary is overcomplete even before multiple randomized bases are concatenated.

\myparagraph{Selection and reconstruction.} For $b=1,\dots,B$ let $\boldsymbol{\alpha}^\star_{t,b}\in\{-1,0,+1\}^{\tilde{d}}$ collect basis $b$'s coefficients. The encoder keeps the $M$ largest-magnitude entries across the concatenated scores $\{\mathbf{u}_{t,1},\dots,\mathbf{u}_{t,B}\}$, recording for each the basis it came from:
\begin{equation}
\label{eq:oc-support}
\mathcal{I}_t^{\mathrm{oc}} := \TopM\big(\{|(\mathbf{u}_{t,b})_j| : b\in[B],\, j\in[\tilde{d}]\}\big), \qquad (\boldsymbol{\alpha}^\star_{t,b})_j = \begin{cases}\sign\big((\mathbf{u}_{t,b})_j\big), & (b,j)\in\mathcal{I}_t^{\mathrm{oc}}, \\ 0, & \text{otherwise,}\end{cases}
\end{equation}
where $[B]:=\{1,\dots,B\}$ and $\mathcal{I}_t^{\mathrm{oc}}$ is now a set of (basis, coordinate) pairs, unlike the single-basis support $\mathcal{I}_t$ of Sec.~\ref{sec:method:selection}. The decoder regenerates every $\mathbf{D}_{t,b}$ from the shared seed and reconstructs
\begin{equation}
\label{eq:oc-noise}
\mathbf{y}_t = \sum_{b=1}^{B}\mathbf{P}\mathbf{D}_{t,b}\mathbf{H}\boldsymbol{\alpha}^\star_{t,b}, \qquad \mathbf{z}_t^\star = \mathbf{y}_t/\std(\mathbf{y}_t).
\end{equation}

\myparagraph{Hierarchical top-$M$.}
A full sort over all $B\tilde{d}$ scores is unnecessary when only the global top $M$ are required. It is sufficient to retain each basis's local top $M$ scores and then select the global top $M$ from the resulting $BM$ candidates.
\begin{proposition}
\label{prop:hierarchical-topm}
Let $\mathcal{I}_t^{\mathrm{local}}$ be the union of each basis's own top-$M$ indices, $|\mathcal{I}_t^{\mathrm{local}}| = BM$. The global top-$M$ of $\{|(\mathbf{u}_{t,b})_j|\}_{b,j}$ is exactly the top-$M$ of $\{|(\mathbf{u}_{t,b})_j| : (b,j) \in \mathcal{I}_t^{\mathrm{local}}\}$.
\end{proposition}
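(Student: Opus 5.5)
The plan is a short exchange argument. Fix a global top-$M$ set $\mathcal{S}$, and show that $\mathcal{S}\subseteq\mathcal{I}_t^{\mathrm{local}}$. Once we have that, the top-$M$ of the restricted collection must coincide with $\mathcal{S}$. Ties have to be handled consistently, so I would first fix one strict total order on the pairs $(b,j)$: compare by $|(\mathbf{u}_{t,b})_j|$, and break ties by a fixed rule such as lexicographic order on $(b,j)$. Both the local and the global top-$M$ are defined with respect to this same order. This makes every top-$M$ set unique and the statement an exact set identity. Without such a convention, the claim only holds up to the choice among tied elements.

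The key step is as follows. Take $(b,j)\in\mathcal{S}$ and suppose, for contradiction, that $(b,j)$ is not among the top $M$ of basis $b$. Then basis $b$ alone contains $M$ pairs $(b,j')$ that strictly precede $(b,j)$ in the total order. Each of these also precedes $(b,j)$ in the global collection, so $(b,j)$ has at least $M$ global predecessors and cannot lie in the global top-$M$, a contradiction. Therefore $\mathcal{S}\subseteq\mathcal{I}_t^{\mathrm{local}}$. This assumes $\tilde{d}\ge M$ so that each local top-$M$ is well defined, and it then gives $|\mathcal{I}_t^{\mathrm{local}}|=BM$ because different bases contribute disjoint pairs.

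To conclude, let $\mathcal{S}'$ be the top-$M$ of the restricted collection. Every element of $\mathcal{S}$ is in the restricted collection. Each element of $\mathcal{S}$ has fewer than $M$ global predecessors, hence fewer than $M$ predecessors within the restricted subcollection, and so it lies in $\mathcal{S}'$. Since $|\mathcal{S}|=|\mathcal{S}'|=M$, we get $\mathcal{S}=\mathcal{S}'$.

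The only real obstacle is the tie-breaking convention. With magnitude ties, "the" top-$M$ is not unique, and a careless local choice could seem to break the equality. Imposing a single global strict order resolves this, and I would state it explicitly at the start of the proof.
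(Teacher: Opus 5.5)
Your proof is correct and uses the same idea as the paper's one-line argument: an entry ranked below its own basis's top-$M$ already has $M$ larger entries in that basis, so it cannot belong to the global top-$M$. Your fixed tie-breaking order and your second step, showing that the restricted top-$M$ coincides with $\mathcal{S}$, make rigorous what the paper leaves implicit.
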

\begin{proof}
No basis contributes more than $M$ entries to a global set of size $M$, so discarding a basis's entries below its own top-$M$ cannot discard a global top-$M$ member.
\end{proof}
This turns a sort over $B\tilde{d}$ scores into $B$ local top-$M$ searches plus one search over $BM \ll B\tilde{d}$ candidates, and batches reconstruction across the active bases rather than looping over them.

\myparagraph{Bitstream and rate matching.}
The overcomplete dictionary contains $B\tilde{d}$ candidate atoms, so comparing it with the single-basis dictionary at the same $M$ would allocate a larger bitrate budget to the overcomplete variant. We therefore compare the two at matched slot length. Using the grouped protocol of App.~\ref{app:bitstream_protocol}, for the overcomplete variant, each randomized basis forms one group ($G=B$, $K_G=\tilde{d}$). For each single-basis configuration $M_r$, we choose the overcomplete budget $M_{\mathrm{oc}}$ whose grouped slot length is closest to the single-basis cost $\lceil\log_2\binom{\tilde{d}}{M_r}\rceil+M_rC$. We note that since the realized per-basis counts $(m_1,\ldots,m_B)$ are unknown before the actual support is selected, we estimate the grouped slot length during rate matching using the balanced allocation $m_b\in\{\lfloor M_{\mathrm{oc}}/B\rfloor,\lceil M_{\mathrm{oc}}/B\rceil\}$. Table~\ref{tab:oc-rate-matching} lists the matched configuration pairs used in Sec.~\ref{sec:vid_streaming_app}.

\begin{table}[h]
\centering
\small
\begin{tabular}{rrrr}
\toprule
Reference \oursshort{} $M_r$ & Overcomplete $M_{\mathrm{oc}}$ & \oursshort{} bits/slot & Overcomplete bits/slot \\
\midrule
16    & 13   & 276    & 272    \\
32    & 26   & 523    & 523    \\
64    & 52   & 984    & 985    \\
128   & 103  & 1{,}844 & 1{,}843 \\
256   & 204  & 3{,}436 & 3{,}438 \\
512   & 402  & 6{,}365 & 6{,}367 \\
1{,}024 & 789  & 11{,}710 & 11{,}713 \\
1{,}280 & 979     & 14{,}226 & 14{,}225 \\
1{,}536 & 1{,}168 & 16{,}668 & 16{,}668 \\
2{,}048 & 1{,}541 & 21{,}375 & 21{,}370 \\
\bottomrule
\end{tabular}
\caption{\textbf{Overcomplete dictionary rate-matched configurations at $B=8$, $\tilde{d}=2^{19}$.} The overcomplete dictionary always needs strictly fewer atoms than the reference it matches, since each of its indices costs more to name.}
\label{tab:oc-rate-matching}
\end{table}

\clearpage

%% file: supplementary/algorithms.tex
\section{\oursshort{} pseudo-code}
\label{app:algorithms}
Algorithms~\ref{alg:compression} and~\ref{alg:decompression} summarize the single-trajectory compression and decompression shared by the image and video formulations, over one target $\mathbf{x}_{\mathrm{src}}$. They omit the outer streaming-video segmentation and KV-cache refresh logic of Sec.~\ref{sec:vid_streaming_app}, which is specified separately in App.~\ref{app:video_experiment_configurations}. The operator $\mathrm{opt}(\mathbf{A}, \mathbf{x}, \hat{\mathbf{x}}, M)$ denotes the closed-form top-$M$ threshold and coefficient quantization using the implicit dictionary $\mathbf{A}$ of Sec.~\ref{sec:method:selection}; $\mathbf{A}$ is applied through the FWHT and is never materialized explicitly.

\begin{algorithm}[h]
\caption{\oursshort{} -- Compression}
\label{alg:compression}
\begin{algorithmic}[1]
\Require Target $\mathbf{x}_{\mathrm{src}}$, trajectory length $T$, tail length $N$ ($N=0$ for the consistency formulation), atoms per step $M$, formulation $\in\{\mathrm{DDPM}, \mathrm{consistency}\}$, shared seed $s$.
\Ensure Compressed representation $E(\mathbf{x}_{\mathrm{src}})$.
\State $E \gets \emptyset$ \Comment{Initialize the compressed representation}
\If{formulation $=$ DDPM}
    \State $\mathbf{x}_T \gets \mathrm{rand\_normal}(s)$ \Comment{$\mathbf{x}_T$ generation; seed $s$ shared with decoder}
\Else \Comment{Bootstrap selection counts as the first of the $T$ bit-carrying consistency steps}
    \State $\boldsymbol{\alpha} \gets \mathrm{opt}(\mathbf{A}_{T+1}, \mathbf{x}_{\mathrm{src}}, \mathrm{mean}(\mathbf{x}_{\mathrm{src}}), M)$ \Comment{Closed-form threshold, Eq.~\eqref{eq:thresholding}; proxy bootstrap against the source mean, App.~\ref{app:sampling_formulations}}
    \State $\mathbf{x}_T \gets \mathbf{A}_{T+1}\boldsymbol{\alpha} / \std(\mathbf{A}_{T+1}\boldsymbol{\alpha})$ \Comment{Eq.~\eqref{eq:std-norm}}
    \State $E \gets E \,\|\, \mathrm{pack}(\boldsymbol{\alpha})$ \Comment{Lexicographic rank \& sign bits, App.~\ref{app:bitstream_protocol}; concatenated to the compressed representation}
\EndIf
\For{$t = T, \ldots, N+2$}
    \State $\hat{\mathbf{x}}_{0|t} \gets$ $D(\mathbf{x}_t, t)$ \Comment{Denoiser activation}
    \State $\boldsymbol{\alpha} \gets \mathrm{opt}(\mathbf{A}_t, \mathbf{x}_{\mathrm{src}}, \hat{\mathbf{x}}_{0|t}, M)$ \Comment{Closed-form threshold against the residual, Eq.~\eqref{eq:thresholding}}
    \State $\mathbf{z}^\star \gets \mathbf{A}_t\boldsymbol{\alpha} / \std(\mathbf{A}_t\boldsymbol{\alpha})$ \Comment{Rescaled to unit variance, Eq.~\eqref{eq:std-norm}}
    \State $\mathbf{x}_{t-1} \gets \boldsymbol{\mu}_t(\mathbf{x}_t) + \sigma_t\mathbf{z}^\star$ \Comment{Eq.~\eqref{eq:ddpm}}
    \State $E \gets E \,\|\, \mathrm{pack}(\boldsymbol{\alpha})$ \Comment{Lexicographic rank \& sign bits, App.~\ref{app:bitstream_protocol}; concatenated to the compressed representation}
\EndFor
\State \Return $E(\mathbf{x}_{\mathrm{src}})$
\end{algorithmic}
\end{algorithm}

\begin{algorithm}[h]
\caption{\oursshort{} -- Decompression}
\label{alg:decompression}
\begin{algorithmic}[1]
\Require Compressed representation $E$, trajectory length $T$, tail length $N$, formulation $\in\{\mathrm{DDPM}, \mathrm{consistency}\}$, shared seed $s$.
\Ensure Decompressed target $\tilde{\mathbf{x}}_0$.
\If{formulation $=$ DDPM}
    \State $\tilde{\mathbf{x}}_T \gets \mathrm{rand\_normal}(s)$ \Comment{identical to the encoder's $\mathbf{x}_T$, same shared seed $s$}
\Else
    \State $\boldsymbol{\alpha} \gets \mathrm{unpack}(E)$ \Comment{Reads the bootstrap support \& signs the encoder packed}
    \State $\tilde{\mathbf{x}}_T \gets \mathbf{A}_{T+1}\boldsymbol{\alpha} / \std(\mathbf{A}_{T+1}\boldsymbol{\alpha})$ \Comment{Identical to the encoder}
\EndIf
\For{$t = T, \ldots, N+2$}
    \State $\hat{\mathbf{x}}_{0|t} \gets$ $D(\tilde{\mathbf{x}}_t, t)$ \Comment{Denoiser activation}
    \State $\boldsymbol{\alpha} \gets \mathrm{unpack}(E)$ \Comment{Reads this step's support \& signs from the bitstream}
    \State $\mathbf{z}^\star \gets \mathbf{A}_t\boldsymbol{\alpha} / \std(\mathbf{A}_t\boldsymbol{\alpha})$ \Comment{Rescaled to unit variance, Eq.~\eqref{eq:std-norm}}
    \State $\tilde{\mathbf{x}}_{t-1} \gets \boldsymbol{\mu}_t(\tilde{\mathbf{x}}_t) + \sigma_t\mathbf{z}^\star$ \Comment{Eq.~\eqref{eq:ddpm}}
\EndFor
\If{formulation $=$ DDPM}
    \State perform $N+1$ deterministic DDIM steps \Comment{Eq.~\eqref{eq:mu_ddim}}
\Else
    \State perform the final deterministic sampling step \Comment{Eq.~\eqref{eq:mu_consistency}}
\EndIf
\State \Return $\tilde{\mathbf{x}}_0$
\end{algorithmic}
\end{algorithm}

See Secs.~\ref{sec:experiments} and~\ref{sec:vid_streaming_app}, together with Apps.~\ref{app:image_experiment_configurations} and~\ref{app:video_experiment_configurations}, for the configurations used in practice.

\clearpage